\documentclass[12pt]{myjmlr}
\usepackage{times,wrapfig,amsmath,amsfonts,bm,color,enumitem,algorithm,algpseudocode}

\newcommand{\inner}[2]{\left\langle #1, #2 \right\rangle}

\newcommand{\E}{\ensuremath{\mathbb{E}}}

\newcommand{\norm}[1]{\left\lVert{#1}\right\rVert}
\newcommand{\abs}[1]{\left\lvert{#1}\right\rvert}



\newcommand{\sign}{\operatorname{sign}}
\newcommand{\conv}{\operatorname{conv}}

\newcommand{\R}{\mathbb{R}}

\newcommand{\HH}{\mathcal{H}}
\newcommand{\GG}{\mathcal{G}}

\newcommand{\defeq}{\triangleq}

\mathchardef\hyphen="2D
\usepackage{amsmath}
\usepackage{hyperref}
\usepackage{enumitem}
\usepackage{url}
\usepackage{times}

\makeatletter
\newtheorem*{rep@theorem}{\rep@title}
\newcommand{\newreptheorem}[2]{%
\newenvironment{rep#1}[1]{%
 \def\rep@title{#2 \ref{##1}}%
 \begin{rep@theorem}}%
 {\end{rep@theorem}}}
\makeatother

\newcommand{\calF}{\mathcal{F}}

\newcommand{\calR}{\mathcal{R}}
\newcommand{\calG}{\mathcal{G}}
\newcommand{\calL}{\mathcal{L}}
\newcommand{\calN}{\mathcal{N}}
\newcommand{\calX}{\mathcal{X}}

\newcommand{\gpath}{\text{path}}
\newcommand{\layer}{\text{layer}}
\newcommand{\DAG}{\text{DAG}}
\newcommand{\dout}{d_\text{out}}
\newcommand{\din}{d_\text{in}}
\newtheorem{claim}{Claim}
\newenvironment{sketch}{\paragraph{Proof sketch}\mbox{}}{\hfill\BlackBox}
\newcommand{\vin}{v_{\textrm{in}}}
\newcommand{\vout}{v_{\textrm{out}}}
\newcommand{\relu}{\sigma_{\textsc{relu}}}
\newcommand{\gnorm}[1]{\norm{#1}}

\newcommand{\pathr}{\phi}
\newcommand{\convexnn}{\nu}

\newreptheorem{theorem}{Theorem}
\newreptheorem{lemma}{Lemma}

\newcommand{\removed}[1]{}
\title{Norm-Based Capacity Control in Neural Networks}
 \author{\Name{Behnam Neyshabur} \Email{bneyshabur@ttic.edu}\\
 \Name{Ryota Tomioka} \Email{tomioka@ttic.edu}\\
  \Name{Nathan Srebro} \Email{nati@ttic.edu}\\
  \addr Toyota Technological Institute at Chicago\\
Chicago, IL 60637, USA \\}
\begin{document}
\maketitle
\begin{abstract}
We investigate the capacity, convexity and characterization of a
general family of norm-constrained feed-forward networks.
\end{abstract}

\begin{keywords}
Feed-forward neural networks, deep learning, scale-sensitive capacity control
\end{keywords}

\section{Introduction}
The statistical complexity, or capacity, of {\em unregularized}
feed-forward neural networks, as a function of the network size and
depth, is fairly well understood.  With hard-threshold activations,
the VC-dimension, and hence sample complexity, of the class of
functions realizable with a feed-forward network is equal, up to
logarithmic factors, to the number of edges in the
network \citep{anthony09,shalev14}, corresponding to the number of parameters.  With continuous activation functions the VC-dimension
could be higher, but is fairly well understood and is still controlled
by the size and depth of the network.\footnote{Using weights
with very high precision and vastly different magnitudes it is
possible to shatter a number of points quadratic in the number of
edges when activations such as the sigmoid, ramp or hinge are used
\citep[Chapter 20.4]{shalev14}.  But even with such
activations, the VC dimension can still be bounded by the size and
depth \citep{bartlett98,anthony09,shalev14}.}

But feedforward networks are often trained with some
kind of explicit or implicit regularization, such as weight decay,
early stopping, ``max regularization'', or more exotic
regularization such as drop-outs.  What is the effect of such
regularization on the induced hypothesis class and its capacity?

For linear prediction (a one-layer feed-forward network) we know that
using regularization the capacity of the class can be bounded only in
terms of the norms, with no (or a very weak) dependence on the number
of edges (i.e.~the input dimensionality or
number of linear coefficients).  E.g., we understand very well how the
capacity of $\ell_2$-regularized linear predictors can be bounded in
terms of the norm alone (when the norm of the data is also bounded),
even in infinite dimension.

A central question we ask is: can we bound the capacity of
feed-forward network in terms of norm-based regularization alone,
without relying on network size and even if the network size (number
of nodes or edges) is unbounded or infinite?  What type of
regularizers admit such capacity control?  And how does the capacity
behave as a function of the norm, and perhaps other network
parameters such as depth?

Beyond the central question of capacity control, we also analyze the
convexity of the resulting hypothesis class---unlike unregularized
size-controlled feed-forward networks, infinite magnitude-controlled
networks have the potential of yielding convex hypothesis classes
(this is the case, e.g., when we move from rank-based control on
matrices, which limits the number of parameters to magnitude based
control with the trace-norm or max-norm).  A convex class might be
easier to optimize over and might be convenient in other ways.

In this paper we focus on networks with rectified linear units and two
natural types of norm regularization: bounding the norm of the
incoming weights of each unit (per-unit regularization) and bounding
the overall norm of all the weights in the system jointly (overall
regularization, e.g.~limiting the overall sum of the magnitudes, or
square magnitudes, in the system).  We generalize both of these with a
single notion of group-norm regularization: we take the $\ell_p$
norm over the weights in each unit and then the $\ell_q$ norm over
units.  In Section \ref{sec:group} we present this regularizer and
obtain a tight understanding of when it provides for size-independent
capacity control and a characterization of when it induces convexity.
We then apply these generic results to per-unit regularization
(Section \ref{sec:path}) and overall regularization (Section
\ref{sec:overall}), noting also other forms of regularization that are
equivalent to these two.  In particular, we show how per-unit
regularization is equivalent to a novel path-based regularizer
and how overall $\ell_2$ regularization for two-layer networks is
equivalent to so-called ``convex neural networks''
\citep{Bengio05}.  In terms of capacity control, we show that
per-unit regularization allows size-independent capacity-control only
with a per-unit $\ell_1$-norm, and that overall $\ell_p$
regularization allows for size-independent capacity control only when
$p \leq 2$, even if the depth is bounded.  In any case, even if we
bound the sum of all magnitudes in the system, we show that an
exponential dependence on the depth is unavoidable.

As far as we are aware, prior work on size-independent capacity
control for feed-forward networks considered only per-unit $\ell_1$
regularization, and per-unit $\ell_2$ regularization for two-layered
networks (see discussion and references at the beginning of Section
\ref{sec:path}).  Here, we extend the scope significantly, and
provide a broad characterization of the types of regularization
possible and their properties.  In particular, we consider overall
norm regularization, which is perhaps the most natural form of
regularization used in practice (e.g.~in the form of weight decay).
We hope our study will be useful in thinking about, analyzing and
designing learning methods using feed-forward networks.  Another
motivation for us is that complexity of large-scale optimization is
often related to scale-based, not dimension-based complexity.
Understanding when the scale-based complexity depends exponentially on
the depth of a network might help shed light on understanding the
difficulties in optimizing deep networks.

\section{Preliminaries: Feedforward Neural Networks}

A feedforward neural network that computes a function
$f\!:\!\R^D\rightarrow\R$ is specified by a directed acyclic graph (DAG)
$G(V,E)$ with $D$ special ``input nodes'' $\vin[1],\ldots,\vin[D]\in
V$ with no incoming edges and a special ``output node'' $\vout\in V$
with no outgoing edges, weights $w\!:\!E\rightarrow \R$ on the edges, and
an {\em activation function} $\sigma\!:\!\R\rightarrow\R$.  

Given an input $x\in\R^D$, the output values of the input units are set
to the coordinates of $x$, $o(\vin[i])=x[i]$ (we might want
to also add a special ``bias'' node with
$o(\vin[0])=1$, or just rely on the inputs having a fixed ``bias
coordinate''), the output value of internal nodes (all nodes except
the input and output nodes) are defined according to the forward
propagation equation:
\begin{equation}
  \label{eq:forward}
  o(v)=\sigma\left( \sum_{(u\rightarrow v)\in E} w(u\rightarrow v) o(u)\right),
\end{equation}
and the output value of the output unit is defined as
$o(\vout)=\sum_{(u\rightarrow \vout)\in E} w(u\rightarrow \vout)
o(u)$.  The network is then said to compute the function
$f_{G,w,\sigma}(x)=o(\vout)$.  Given a graphs $G$ and activation
function $\sigma$, we can consider the hypothesis class of functions
$\calN^{G,\sigma}=\{ f_{G,w,\sigma}\!:\!\R^D\rightarrow\R \;|\;
w\!:\!E\rightarrow \R \}$ computable using some setting of the weights.

We will refer to the {\em size} of the network, which is the overall
number of edges $\abs{E}$, the {\em depth} $d$ of the network, which
is the length of the longest directed path in $G$, and the {\em
  in-degree} (or width) $H$ of a network, which is the maximum
in-degree of a vertex in $G$.

A special case of feedforward neural networks are layered fully
connected networks where vertices are partitioned into layers and
there is a directed edge from every vertex in layer $i$ to every
vertex in layer $i+1$. We index the layers from the first layer, $i=1$
whose inputs are the input nodes, up to the last layer $i=d$ which
contains the single output node---the number of layers is thus equal
to the depth and the in-degree is the maximal layer size.  We denote
by $\layer(d,H)$ the layered fully connected network with $d$ layers
and $H$ nodes per layer (except the output layer that has a single
node), and also allow $H=\infty$.  We will also use the shorthand
$\calN^{d,H,\sigma}=\calN^{\layer(d,H),\sigma}$ and
$\calN^{d,\sigma}=\calN^{\layer(d,\infty),\sigma}$.

Layered networks can be parametrized by a sequence of matrices $W_1
\in \R^{H\times D}, W_2,$ $W_3, \ldots, W_{d-1} \in \R^{H\times H}, W_d
\in \R^{1 \times H}$ where the row $W_i[j,:]$ contains the input weights
to unit $j$ in layer $i$, and
\begin{equation}
  \label{eq:layered}
f_W(x) = W_d \sigma (W_{d-1} \sigma( W_{d-2} ( \ldots \sigma({W_{1}} x)))),
\end{equation}
where $\sigma$ is applied element-wise.

We will focus mostly on the hinge, or RELU (REctified Linear Unit)
activation, which is currently in popular use \citep{nair10,glorot11,zeiler13}, $\relu(z) = [z]_+ = \max(z,0)$.
When the activation will not be specified, we will
implicitly be referring to the RELU.  The RELU has several convenient
properties which we will exploit, some of them shared with other
activation functions:
\begin{description}\itemsep1pt \parskip0pt \parsep0pt
\item[Lipshitz] The hinge is Lipschitz continuous with Lipshitz
  constant one.  This property is also shared by the sigmoid and the
  ramp activation $\sigma(z)=\min(\max(0,z),1)$.
\item[Idempotency] The hinge is idempotent,
  i.e.~$\relu(\relu(z))=\relu(z)$. This property is also shared by the ramp
  and hard threshold activations.
\item[Non-Negative Homogeneity] For a non-negative scalar $c \geq 0$
  and any input $z\in\R$ we have $\relu(c\cdot z)=c\cdot \relu(z)$.
  This property is important as it allows us to scale the incoming
  weights to a unit by $c>0$ and scale the outgoing edges by $1/c$
  without changing the the function computed by the network.  For
  layered graphs, this means we can scale $W_i$ by $c$ and compensate
  by scaling $W_{i+1}$ by $1/c$.
\end{description}

We will consider various measures $\alpha(w)$ of the magnitude of the
weights $w(\cdot)$.  Such a measure induces a complexity measure on functions $f\in\calN^{G,\sigma}$ defined by $\alpha^{G,\sigma}(f)=\inf_{f_{G,w,\sigma}=f} \alpha(w)$.
The sublevel sets of the complexity measure $\alpha^{G,\sigma}$ form a family of hypothesis classes
$\calN^{G,\sigma}_{\alpha\leq a} = \{ f\in\calN^{G,\sigma} \;|\;
\alpha^{G,\sigma}(f)\leq a \}$.  Again we will use the shorthand
$\alpha^{d,H,\sigma}$ and $\alpha^{d,\sigma}$ when referring to layered
graphs $\layer(d,H)$ and $\layer(d,\infty)$ respectively, and frequently drop
$\sigma$ when RELU is implicitly meant.

For binary function $g:\{\pm 1\}^D \rightarrow {\pm 1}$ we say that $g$ is
realized by $f$ with unit margin if $\forall_x f(x)g(x)\geq 1$.  A set
of points $S$ is shattered with unit margin by a hypothesis class
$\calN$ if all $g:S \rightarrow {\pm 1}$ can be realized with unit
margin by some $f\in\calN$.

\section{Group Norm Regularization}
\label{sec:group}
\removed{
The group norm will be the main generic regularizer we will consider, as it captures as special cases other regularizers of interest. For any DAG $G$ we can define the following measure:
$$
\overline{\mu}_{p,q}(G,w) = \Biggl(\sum_{v \in
  V}\left(\sum_{(u\rightarrow v) \in E} \left\lvert w(u\rightarrow
v)\right\rvert ^p\right)^{q/p}\Biggr)^{1/q}
$$
For any graph family $\calG(d)$ with depth $d$ and any function $g:\calX\rightarrow \R$, group-complexity ($\overline{\gamma}^\calG_{p,q}$) is defined as:
\begin{align*}
\overline{\gamma}^{\calG(d)}_{p,q}(g) &= \min_{f_{G,w}\in \calN^{\calG(d)} \atop f_{G,w}=g } \frac{\overline{\mu}_{p,q}(G,w)}{d}
\end{align*}

It is often more convenient to simplify the calculations by considering the layered networks. In fact, in the section~\ref{sec:path}, we show that if a function is realizable with a DAG, it is also realizable with a layered network of the same depth. In a layered graph, scaling up the weights of one layer by $\alpha$ and  scaling down the weights of another layer by $\frac{1}{\alpha}$ gives you a network that realized the same function. Therefore it is more national to consider the product of norms:
$$
\gamma_{p,q}^{\layer(d)}(g) =  \min_{f_W \in \calN^{\layer(d)}  \atop f_{W}=g} \mu^{\layer(d)}_{p,q}(W)= \min_{f_W \in \calN^{\layer(d)} \atop f_{W}=g}\prod_{k=1}^d \norm{W_k}_{p,q}
$$
}

Considering the grouping of weights going into each edge of the
network, we will consider the following generic group-norm type
regularizer, parametrized by $1\leq p,q \leq\infty$:
\begin{equation}
  \label{eq:mu}
  \mu_{p,q}(w) = \left(\sum_{v \in V}\left(\sum_{(u\rightarrow v) \in E} \left\lvert w(u\rightarrow v)\right\rvert ^p\right)^{q/p}\right)^{1/q}.
\end{equation}
Here and elsewhere we allow $q=\infty$ with the usual conventions that
$(\sum z_i^q)^{1/q}=\sup z_i$ and $1/q=0$ when it appears in other
contexts.  When $q=\infty$ the group regularizer \eqref{eq:mu} imposes
a per-unit regularization, where we constrain the norm of the incoming
weights of each unit separately, and when $q=p$ the regularizer
\eqref{eq:mu} is an ``overall'' weight regularizer, constraining the
overall norm of all weights in the system.  E.g., when $q=p=1$ we are
paying for the sum of all magnitudes of weights in the network, and
$q=p=2$ corresponds to overall weight-decay where we pay for the sum
of square magnitudes of all weights (i.e.~the overall Euclidean norm
of the weights).

For a layered graph, we have:
\begin{align}
\mu_{p,q}(W) &= \left(\sum_{k=1}^d\sum_{i=1}^H \left(
\sum_{j=1}^H\abs{W_k[i,j]}^p
\right)^{q/p}\right)^{1/q} 
\!\!=d^{1/q} \left(\frac{1}{d} \sum_{k=1}^d
\gnorm{W_k}^q_{p,q}\right)^{1/q} \notag \\ 
&\geq d^{1/q} \left( \prod_{k=1}^d \gnorm{W_k}_{p,q} \right)^{1/d} \defeq
d^{1/q} \sqrt[d]{\gamma_{p,q}(W)} \label{eq:mugeqgamma}
\end{align}
where $\displaystyle \gamma_{p,q}(W) = \prod_{k=1}^d \gnorm{W_k}_{p,q}$
aggregates the layers by multiplication instead of summation.
The inequality~\eqref{eq:mugeqgamma} holds regardless of the
activation function, and so for any $\sigma$ we
have:
\begin{equation}
  \label{eq:mugammaforf}
  \gamma_{p,q}^{d,H,\sigma}(f)\leq \left(\frac{\mu^{d,H,\sigma}(f)_{p,q}}{d^{1/q}}\right)^d.
\end{equation}
But due to the homogeneity of the RELU activation, when this
activation is used we can always balance the norm between the
different layers without changing the computed function so as to
achieve equality in \eqref{eq:mugeqgamma}:
\begin{claim}\label{clm:mugamma}
For any $f\in\calN^{d,H,\relu}$, 
$\displaystyle \mu^{d,H,\relu}_{p,q}(f) = d^{1/q}
\sqrt[d]{\gamma_{p,q}^{d,H,\relu}(f)}$.
\end{claim}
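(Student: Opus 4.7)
The inequality $\mu^{d,H,\relu}_{p,q}(f)\ge d^{1/q}\sqrt[d]{\gamma_{p,q}^{d,H,\relu}(f)}$ is already delivered for free by~\eqref{eq:mugammaforf}, which was derived from AM--GM and holds for any activation. So the content of the claim is the reverse inequality, and my plan is to prove it by a layer-wise rescaling that exploits the non-negative homogeneity of the RELU to equalize the per-layer norms, turning the AM--GM bound into an equality.

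Concretely, fix any weight configuration $W=(W_1,\ldots,W_d)$ with $f_W=f$, and write $\beta_k=\gnorm{W_k}_{p,q}$ and $\gamma=\gamma_{p,q}(W)=\prod_{k=1}^d\beta_k$. Assume first that $\gamma>0$, so every $\beta_k>0$. Define positive scalars $c_k=\gamma^{1/d}/\beta_k$, which satisfy $\prod_{k=1}^d c_k=1$. Let $W'_k=c_kW_k$. Because $\relu(cz)=c\,\relu(z)$ for $c\ge 0$, scaling layer $k$ by $c_k$ scales the network output by $\prod_k c_k=1$; hence $f_{W'}=f_W=f$, so $W'$ is an admissible representation of $f$. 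By construction $\gnorm{W'_k}_{p,q}=c_k\beta_k=\gamma^{1/d}$ for every $k$, and therefore
\begin{equation*}
\mu_{p,q}(W')=\left(\sum_{k=1}^d \gnorm{W'_k}_{p,q}^q\right)^{1/q}=\left(d\,\gamma^{q/d}\right)^{1/q}=d^{1/q}\gamma^{1/d}=d^{1/q}\sqrt[d]{\gamma_{p,q}(W)}.
\end{equation*}
Taking the infimum over all $W$ representing $f$ yields $\mu^{d,H,\relu}_{p,q}(f)\le d^{1/q}\sqrt[d]{\gamma_{p,q}^{d,H,\relu}(f)}$, which combined with the lower bound from~\eqref{eq:mugammaforf} gives equality.

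The only subtlety is the degenerate case $\gamma_{p,q}^{d,H,\relu}(f)=0$. This can happen only if the infimum is approached by configurations in which some layer's norm shrinks to zero; in that case the computed function must be identically zero (a vanishing layer kills the output), so $f\equiv 0$ and $\mu^{d,H,\relu}_{p,q}(f)=0$ as well, matching the right-hand side. Alternatively one can argue this by continuity, taking an infimizing sequence for $\gamma$ and applying the rescaling argument to each element of the sequence, noting that the rescaling is well-defined whenever all $\beta_k$ are strictly positive and that the resulting $\mu(W')$ tends to zero. The main ``obstacle'' is therefore not mathematical but just making sure the rescaling is carried out with non-negative factors (so that RELU homogeneity applies) and that the product of factors is exactly one (so that the represented function is preserved); both are handled by the specific choice $c_k=\gamma^{1/d}/\beta_k$.
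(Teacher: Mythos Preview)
Your proposal is correct and follows essentially the same approach as the paper: rescale each layer by $c_k=\gamma^{1/d}/\gnorm{W_k}_{p,q}$ so that all per-layer norms become equal, use non-negative homogeneity of the RELU to conclude the rescaled weights still realize $f$, and then observe that equal layer norms make the AM--GM inequality in~\eqref{eq:mugeqgamma} tight. The only (minor) difference is that the paper picks a single $\gamma$-optimal representation and rescales it, whereas you rescale an arbitrary representation and pass to the infimum afterward; your version is slightly more careful in that it does not assume the infimum defining $\gamma^{d,H,\relu}_{p,q}(f)$ is attained, and you also address the degenerate case $\gamma=0$, which the paper glosses over.
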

\begin{proof}
  Let $W$ be weights that realizes $f$ and are optimal with respect to
  $\gamma_{p,g}$; i.e.~$\gamma_{p,q}(W) = \gamma^{d,H}_{p,q}(f)$.  Let
  $\widetilde{W}_k = \sqrt[d]{\gamma_{p,q}(W)}W_k/\norm{W_k}_{p,q}$,
  and observe that they also realize $f$.  We now have:
\begin{align}
\mu^{d,H}_{p,q}(f) \leq \mu_{p,q}(\widetilde{W}) = \Bigl(\sum\nolimits_{k=1}^d \norm{\widetilde{W}_k}^q_{p,q}\Bigr)^{1/q}
= \Bigl(d\Bigl(\gamma_{p,q}(W)\Bigr)^{q/d}\Bigr)^{1/q} = d^{1/q}\sqrt[d]{\gamma^{d,H,\relu}_{p,q}(f)}\notag
\end{align}
which together with \eqref{eq:mugeqgamma} completes the proof.
\end{proof}
The two measures are therefore equivalent when we use RELUs, and
define the same level sets, or family of hypothesis classes, which we
refer to simply as $\calN^{d,H}_{p,q}$.  In the remainder of this
Section, we investigate convexity and generalization properties of
these hypothesis classes.

\subsection{Generalization and Capacity}

In order to understand the effect of the norm on the sample
complexity, we bound the Rademacher complexity of the classes
$\calN^{d,H}_{p,q}$.  Recall that the Rademacher Complexity is
a measure of the capacity of a hypothesis class on a specific sample,
which can be used to bound the difference between empirical and
expected error, and thus the excess generalization error of empirical
risk minimization (see, e.g., \cite{bartlett03} for a
complete treatment, and Appendix \ref{sec:rademacher} for the exact definitions we
use).  In particular, the Rademacher complexity typically scales as
$\sqrt{C/m}$, which corresponds to a sample complexity of
$O(C/\epsilon^2)$, where $m$ is the sample size and $C$ is the
effective measure of capacity of the hypothesis class.

\begin{theorem}\label{thm:l-norm}
For any $d,q\geq 1$, any $1\leq p <\infty$ and any set $S=\{x_1,\dots,x_m\}\subseteq\R^D$:
\begin{align*}
\calR_m(\calN_{\gamma_{p,q}\leq \gamma}^{d,H,\relu}) &\leq 
\gamma \left( 2 H^{[\frac{1}{p^*} -
        \frac{1}{q}]_+}\right)^{(d-1)} \calR^{\text{linear}}_{m,p,D}\\
&\leq \sqrt{ \frac{\gamma^2 \left( 2 H^{[\frac{1}{p^*} -
        \frac{1}{q}]_+}\right)^{2(d-1)}\min\{p^*,4\log(2D)\} \max_i \norm{x_i}_{p^*}^2}{m}}
\end{align*}
and so:
\begin{align*}
\calR_m(\calN_{\mu_{p,q}\leq \mu}^{d,H,\relu}) &\leq 
\mu^{d} \left( 2 H^{[\frac{1}{p^*} -
        \frac{1}{q}]_+} /\sqrt[q]{d}\right)^{(d-1)}\calR^{\text{linear}}_{m,p,D}\\
&\leq \sqrt{ \frac{\mu^{2d} \left( 2 H^{[\frac{1}{p^*} -
        \frac{1}{q}]_+} /\sqrt[q]{d}\right)^{2(d-1)}\min\{p^*,4\log(2D)\} \max_i \norm{x_i}_{p^*}^2}{m}}     
\end{align*}
where the second inequalities hold only if $1\leq p \leq 2$, $\calR^{\text{linear}}_{m,p,D}$ is the Rademacher complexity of $D$-dimensional linear predictors with unit $\ell_p$ norm with respect to a set of $m$ samples and $p^*$ is such that $\frac{1}{p^*} + \frac{1}{p}=1$.
\end{theorem}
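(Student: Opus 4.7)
My plan is to induct on depth $d$, peeling the top weight matrix $W_d$ at each step. For the base case $d=1$, the class reduces to linear predictors $x \mapsto W_1 x$ with $W_1\in\R^{1\times D}$ and $\gnorm{W_1}_p = \gnorm{W_1}_{p,q} \leq \gamma$ (the single row collapses the $(p,q)$-norm to $\ell_p$), so $\calR_m = \gamma\,\calR^{\text{linear}}_{m,p,D}$.

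For the inductive step, write $f(x)=W_d \relu(g(x))$ where $g:\R^D\to\R^H$ is the vector of pre-activations at layer $d-1$, and apply H\"older to pull out $\gnorm{W_d}_p \leq \gnorm{W_d}_{p,q}$, leaving $\gnorm{\sum_i\epsilon_i\relu(g(x_i))}_{p^*}$. Factor each coordinate as $g_j = t_j\tilde g_j$ with $t_j = \gnorm{W_{d-1}[j,:]}_p \geq 0$ satisfying $\gnorm{t}_q \leq \gnorm{W_{d-1}}_{p,q}$ and $\tilde g_j$ a depth-$(d-1)$ single-unit network whose last row has unit $\ell_p$-norm; nonnegative homogeneity of the RELU pulls $t_j$ outside the activation.

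The central algebraic step is bounding $\sup_t \gnorm{(t_j Z_j)_j}_{p^*}$ over $\gnorm{t}_q \leq 1$, where $Z_j = \sum_i \epsilon_i \relu(\tilde g_j(x_i))$. For $q \geq p^*$, an $\ell_{q/p^*}$-H\"older argument yields $\gnorm{Z}_s$ with $1/s = 1/p^* - 1/q$; for $q < p^*$ the supremum is attained at a single coordinate, giving $\gnorm{Z}_\infty$. Both regimes unify as $\gnorm{Z}_s$ with $1/s = [1/p^*-1/q]_+$, and the crude bound $\gnorm{Z}_s \leq H^{1/s}\max_j |Z_j|$ produces the per-layer factor $H^{[1/p^*-1/q]_+}$. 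Ledoux--Talagrand contraction (RELU is $1$-Lipschitz with $\relu(0)=0$) then strips the activation at cost of a factor $2$, and after relaxing shared lower-layer weights across units at layer $d-1$ to independent copies (only enlarging the sup) and normalizing by homogeneity, the remainder equals $\calR_m(\calN^{d-1,H}_{\gamma_{p,q}\leq 1})$.

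Unrolling the resulting recursion
\[
\calR_m(\calN^{d,H}_{\gamma_{p,q}\leq\gamma}) \leq 2 H^{[1/p^*-1/q]_+}\,\gamma\,\calR_m(\calN^{d-1,H}_{\gamma_{p,q}\leq 1})
\]
against the base case gives the first stated bound; Claim~\ref{clm:mugamma} then converts it to the $\mu$-bound via $\gamma_{p,q}(f)\leq (\mu_{p,q}(f)/d^{1/q})^d$, and the explicit $\sqrt{\,\cdot/m\,}$ estimates follow by plugging in the standard Kakade--Shalev-Shwartz--Tewari bound on $\calR^{\text{linear}}_{m,p,D}$ valid for $1\leq p\leq 2$. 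I expect the main obstacle to be the mixed-norm optimization over $t$: the two regimes $q \gtrless p^*$ genuinely behave differently and must be analyzed separately before being packaged into the single unified factor $H^{[1/p^*-1/q]_+}$.
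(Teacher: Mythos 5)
Your proposal is correct and follows essentially the same route as the paper: an induction over depth that peels the top layer by duality, pays the per-layer factor $2H^{[\frac{1}{p^*}-\frac{1}{q}]_+}$ (your optimization over the row-norm vector $t$, split into the regimes $q\gtrless p^*$, is a repackaging of the paper's Lemma~\ref{lem:l-norm} reducing the matrix supremum to a single unit), strips the RELU via the contraction lemma at cost $2$, bottoms out at the $\ell_p$-linear base case of Lemma~\ref{lem:layer1}, and converts to the $\mu$-bound through Claim~\ref{clm:mugamma}. The only cosmetic difference is that you renormalize the budget to $1$ at each recursion step, which by non-negative homogeneity of $\gamma^d_{p,q}$ is equivalent to the paper's recursion $\calR_m(\calN^{d,H}_{\gamma_{p,q}\leq\gamma})\leq 2H^{[\frac{1}{p^*}-\frac{1}{q}]_+}\calR_m(\calN^{d-1,H}_{\gamma_{p,q}\leq\gamma})$.
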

\begin{sketch}
We prove the bound by induction, showing that for any $q,d>1$ and $1\leq p < \infty$,
$$
\calR_m(\calN_{\gamma_{p,q}\leq \gamma}^{d,H,\relu}) \leq 2 H^{[\frac{1}{p^*} -\frac{1}{q}]_+}\calR_m(\calN_{\gamma_{p,q}\leq \gamma}^{d-1,H,\relu}).
$$
The intuition is that when $p^*<q$, the Rademacher complexity increases
 by simply distributing the weights among neurons and if $p^*\geq q$
 then the supremum is attained when the output neuron is connected to a neuron with highest Rademacher complexity in the lower layer and all other weights in the top layer are set to zero. For a complete proof, see Appendix \ref{sec:rademacher}.
\end{sketch}

Note that for $2\leq p < \infty$, the bound on the Rademacher complexity scales with $m^{\frac{1}{p}}$ (see section \ref{sec:linear} in appendix) because:
\begin{equation}
\calR^{\text{linear}}_{m,p,D} \leq \frac{\sqrt{2}\norm{X}_{2,p^*}}{m} \leq \frac{\sqrt{2}\max_i\norm{x_i}_{p^*}}{m^{\frac{1}{p}}}
\end{equation}
The bound in Theorem \ref{thm:l-norm} depends on both the magnitude
of the weights, as captured by $\mu_{p,q}(W)$ or $\gamma_{p,q}(W)$, and also on
the width $H$ of the network (the number of nodes in each layer).
However, the dependence on the width $H$ disappears, and the bound
depends only on the magnitude, as long as $q \leq p^*$
(i.e. $1/p+1/q\geq 1$). This happens, e.g., for overall $\ell_1$ and $\ell_2$ regularization, for per-unit $\ell_1$ regularization, and whenever $1/p+1/q=1$.
In such cases, we can omit the size constraint and state the theorem for an infinite-width layered network (i.e.~a network with an infinitely countable number of units, when the number of units is allowed to be as large as needed):
\begin{corollary}\label{cor:noH}
For any $d\geq 1$, $1\leq p < \infty$ and $1\leq q\leq p^*=p/(p-1)$, and any set $S=\{x_1,\dots,x_m\}\subseteq\R^D$,
\begin{align*}
\calR_m(\calN_{\gamma_{p,q}\leq \gamma}^{d,H,\relu}) &\leq 
\gamma 2^{(d-1)} \calR^{\text{linear}}_{m,p,D}\\
&\leq \sqrt{ \frac{\gamma^2 \left( 2 H^{[\frac{1}{p^*} -
        \frac{1}{q}]_+}\right)^{2(d-1)}\min\{p^*,4\log(2D)\} \max_i \norm{x_i}_{p^*}^2}{m}}
\end{align*}
and so:
\begin{align*}
\calR_m(\calN_{\mu_{p,q}\leq \mu}^{d,H,\relu}) &\leq 
\left( 2 \mu /\sqrt[q]{d}\right)^{d}\calR^{\text{linear}}_{m,p,D}\\
&\leq \sqrt{ \frac{ \left( 2 \mu /\sqrt[q]{d}\right)^{2d} \min\{p^*,4\log(2D)\} \max_i \norm{x_i}_{p^*}^2}{m}}     
\end{align*}
where the second inequalities hold only if $1\leq p \leq 2$ and $\calR^{\text{linear}}_{m,p,D}$ is the Rademacher complexity of $D$-dimensional linear predictors with unit $\ell_p$ norm with respect to a set of $m$ samples.
\end{corollary}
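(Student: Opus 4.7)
The plan is to specialize Theorem \ref{thm:l-norm} to the regime $q \leq p^*$ and then upgrade the resulting bound from finite to infinite width. The hypothesis $q \leq p^* = p/(p-1)$ is exactly the statement that $1/p^* - 1/q \leq 0$, i.e.\ $[1/p^* - 1/q]_+ = 0$, so the factor $H^{[1/p^* - 1/q]_+}$ that controls all $H$-dependence in Theorem \ref{thm:l-norm} collapses to $1$ for every $H$. Substituting this into the first and second inequalities of the theorem immediately yields the $\gamma$-bounds of the corollary with no $H$-dependence. The $\mu$-bounds then follow by chaining Claim \ref{clm:mugamma}, which gives $\calN^{d,H,\relu}_{\mu_{p,q}\leq \mu} \subseteq \calN^{d,H,\relu}_{\gamma_{p,q}\leq(\mu/d^{1/q})^d}$, into the $\gamma$-bound, together with the mild loosening $2^{d-1}\leq 2^d$ that rewrites $(\mu/d^{1/q})^d\cdot 2^{d-1}$ in the clean form $(2\mu/\sqrt[q]{d})^d$.

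To pass from arbitrary finite $H$ to $H=\infty$, I would argue that the empirical Rademacher complexity on a fixed sample $S=\{x_1,\dots,x_m\}$ depends only on the function values at the $m$ points of $S$. Given $f \in \calN^{d,\infty,\relu}_{\gamma_{p,q}\leq\gamma}$ realized by a (countable) sequence of weight matrices $W_k$ with $\prod_k \norm{W_k}_{p,q}\leq \gamma$, I would truncate each $W_k$ to the rows and columns of largest $\ell_p$-norm. Since each $\norm{W_k}_{p,q}$ is finite its tails vanish, and since RELU networks are continuous in their weights the truncated network approximates $f$ on the finite set $S$ to any desired accuracy. Truncation can only decrease each $\norm{W_k}_{p,q}$ and hence $\gamma_{p,q}$, so each approximation lies in $\calN^{d,H,\relu}_{\gamma_{p,q}\leq \gamma}$ for some finite $H$. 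Because the bound from the first step is uniform in $H$, it passes to the infinite-width supremum.

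The explicit second inequalities in the $1\leq p\leq 2$ regime then follow, exactly as in Theorem \ref{thm:l-norm}, from the standard $\ell_p$ linear Rademacher bound $\calR^{\text{linear}}_{m,p,D}\leq \sqrt{\min\{p^*,4\log(2D)\}\max_i\norm{x_i}_{p^*}^2/m}$. The only non-routine step is the truncation argument required to handle $H=\infty$; everything else is algebraic simplification from Theorem \ref{thm:l-norm} and Claim \ref{clm:mugamma}.
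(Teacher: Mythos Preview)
Your approach is correct and matches the paper's: the paper gives no separate proof of Corollary~\ref{cor:noH}, treating it as immediate from Theorem~\ref{thm:l-norm} once one observes that $q\leq p^*$ forces $[1/p^*-1/q]_+=0$ and hence $H^{[1/p^*-1/q]_+}=1$. One minor point: the corollary as formally stated is still for the class $\calN^{d,H,\relu}$ with finite $H$ (the bound is simply uniform in $H$), so your truncation argument---while sound and in keeping with the surrounding text's intent of allowing infinite width---is not strictly required to establish the statement as written.
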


\subsection{Tightness}\label{sec:tight}
We next investigate the tightness of the complexity bound in Theorem
\ref{thm:l-norm}, and show that when $1/p+1/q<1$ the dependence on the
width $H$ is indeed unavoidable.  We show not only that the bound on
the Rademacher complexity is tight, but that the implied bound on the
sample complexity is tight, even for binary classification with a
margin over binary inputs.  To do this, we show how we can shatter the
$m=2^D$ points $\{\pm 1\}^D$ using a network with small group-norm:

\begin{theorem}\label{thm:shattering}
For any $p,q \geq 1$ (and $1/p^*+1/p=1$) and any depth $d\geq 2$, the
$m=2^D$ points $\{ \pm 1 \}^D$ can be shattered with unit margin by
$\calN^{d,H}_{\gamma_{p,q}\leq\gamma}$ with:
$$
\gamma \leq D^{1/p} \, m^{1/p+1/q} \, H^{-(d-2)[1/p^*-1/q]_+}
$$
\end{theorem}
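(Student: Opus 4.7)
My plan is to exhibit, for every labeling $g:\{\pm 1\}^D\to\{\pm 1\}$, an explicit depth-$d$ RELU network that realizes $g$ with unit margin and whose group norm $\gamma_{p,q}$ matches the claimed bound.

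\emph{Depth $2$.} I would use the standard ``one-hidden-unit-per-cube-vertex'' construction: $m=2^D$ hidden units, where the $j$-th unit has incoming weights proportional to $x_j$ together with an additive bias chosen so that its post-ReLU output is $1$ on input $x_j$ and $0$ on every other vertex of the cube; the output weight on unit $j$ is $g(x_j)\in\{\pm 1\}$, realizing $g$ with unit margin. Directly computing $\|W_1\|_{p,q}$ (which groups $m$ rows each of $\ell_p$-mass $\Theta(D^{1/p})$ after the appropriate per-unit rescaling that is free by Claim \ref{clm:mugamma}) and $\|W_2\|_{p,q}=m^{1/p}$ and multiplying gives $\gamma_{p,q}\leq D^{1/p}m^{1/p+1/q}$, establishing the case $d=2$ (the $H^{-(d-2)[\cdot]_+}$ factor being trivially $1$ here).

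\emph{Depth $d>2$.} Because the detector layer emits a non-negative, essentially one-hot vector, every subsequent ReLU acts as the identity and the remainder of the network collapses to a linear map $u\mapsto g^{\top}u$. Realizing $g$ therefore amounts to factorizing the row vector $g\in\{\pm 1\}^m$ as $W_d\cdot W_{d-1}\cdots W_2$ with $W_k\in\R^{H\times H}$ for $2\leq k\leq d-1$ and $W_d\in\R^{1\times H}$, and the task is to do so at small $\prod_{k\geq 2}\|W_k\|_{p,q}$. To beat the wasteful identity factorization (which pays $H^{1/q}$ per inserted layer) I would use $d-2$ ``fan-out/fan-in'' layers whose nonzero entries are scaled by $H^{-1/p^*}$: each spreads the signal uniformly across the $H$ available units and contributes only $\|W_k\|_{p,q}=H^{1/q-1/p^*}$, i.e.\ exactly $H^{-[1/p^*-1/q]_+}$ in the regime $q>p^*$ (and at most $1$ otherwise). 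Composing $d-2$ such layers yields the promised $H^{-(d-2)[1/p^*-1/q]_+}$ saving.

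\emph{Main obstacle.} The delicate step is specifying the $d-2$ intermediate matrices so that (i) their product, followed by $W_d$, reproduces an arbitrary sign vector $g\in\{\pm 1\}^m$; (ii) all intermediate activations remain non-negative, so that the inserted ReLU's really are linear; and (iii) each factor simultaneously attains the per-layer norm above. I expect this to reduce to a H\"older-type matrix factorization that mirrors, in reverse, the inductive peeling in the proof of Theorem \ref{thm:l-norm}: whatever $H^{[1/p^*-1/q]_+}$ the generalization bound loses per extra layer is precisely what this spreading construction should save per layer. Once the factorization is in place, the remaining work (norm arithmetic and the homogeneity rebalancing of Claim \ref{clm:mugamma}) is routine.
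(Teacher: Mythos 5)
Your overall strategy (a detector layer for the cube vertices, sign weights on top, and width-$H$ ``spreading'' to buy the $H^{-(d-2)[1/p^*-1/q]_+}$ factor) is the same as the paper's, but two steps do not go through as written. First, the depth-$2$ accounting: an exact one-hot detector needs an affine threshold, and if a unit is to output some $t>0$ on $x_j$ and $\le 0$ on every vertex at Hamming distance $\ge 1$, its slope $a$ and bias $b$ must satisfy $aD+b=t$ and $a(D-2)+b\le 0$, forcing $\abs{b}\ge a(D-2)\ge t(D-2)/2$. So each first-layer row has $\ell_p$ norm $\Theta(D)$, dominated by the bias, not $\Theta(D^{1/p})$; and since unit margin forces the outgoing weight of unit $j$ to have magnitude at least $1/t$, the product of the two layer norms is at least $\tfrac{D-2}{2}\,m^{1/p+1/q}$ for \emph{any} choice of the $t_j$'s. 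Claim~\ref{clm:mugamma} cannot repair this: per-unit rescaling moves norm between layers but leaves $\gamma_{p,q}$ invariant. (Biases are also not clearly available in $\calN^{d,H}$ for inputs in $\{\pm1\}^D$.) The paper instead uses bias-free rows $w_j\in\{\pm1\}^D$ of $\ell_p$ norm exactly $D^{1/p}$ with the label vector as the output row (at the price that the detectors are no longer one-hot); that is where $D^{1/p}$ rather than $D$ comes from. Your version still supports the qualitative sample-complexity conclusions (since $D=\log_2 m$), but it does not prove the stated inequality for $p>1$.

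The more serious gap is the $d>2$ case, which you defer to a hoped-for ``H\"older-type factorization'': that construction is the actual content of the theorem beyond $d=2$, and the scaling you do specify would fail. A sparse fan-out layer whose nonzero entries are $H^{-1/p^*}$ does have $\norm{W_k}_{p,q}=H^{1/q-1/p^*}$, but it attenuates the signal by $H^{-1/p^*}$, and restoring unit margin at the output eats the savings: if the layers above the detectors must compute $u\mapsto g^\top u$ exactly, such a copy-chain yields a norm product $m^{1/p}H^{(d-2)/q-1/p^*}$ rather than the required $m^{1/p}H^{(d-2)(1/q-1/p^*)}$, i.e.\ it is off by $H^{(d-3)/p^*}$ for $d\ge 4$; per-layer norms cannot be optimized in isolation. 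The paper's construction is dense averaging: recursively replace the current top unit by $H$ identical copies and add a new top unit with all weights $1/H$. Each inserted layer is then the all-$(1/H)$ matrix, whose rows have $\ell_p$ norm $H^{-1/p^*}$ and hence layer norm $H^{1/q-1/p^*}$, while the computed value is preserved exactly; the one-time $H^{1/q}$ incurred by duplicating the second layer is offset by the final averaging row of norm $H^{-1/p^*}$. Your obstacle (ii) (ReLU clipping of the signed combination) is real but costs only a constant factor: carry $\sum_{j:b_j=+1}o_j$ and $\sum_{j:b_j=-1}o_j$ (or $[z]_+$ and $[-z]_+$) on separate non-negative channels and take the signed combination only at the linear output; no factorization lemma is needed. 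Finally, in the regime $q\le p^*$ a spreading layer costs $H^{1/q-1/p^*}\ge 1$, not ``at most $1$''; there one simply inserts $d-2$ single-unit pass-through layers of unit weight, as the paper does.
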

\begin{proof}
Consider a size $m$ subset $S_m$ of $2^D$ vertices of the $D$
 dimensional hypercube  $\{-1,+1\}^D$. We construct the first layer
 using $m$ units. Each unit 
has a unique weight vector consisting of $+1$ and $-1$'s and will output
a positive value if and only if the sign pattern of the input $x\in S_m$ matches
that of the weight vector. The second layer has a single unit and
connects to all $m$ units in the first layer. For any $m$
dimensional sign pattern $b\in\{-1,+1\}^{m}$, we can choose the
weights of the second layer to be $b$, and the network will output the
desired sign for each $x\in S_m$ with unit margin. The norm
 of the network is at most
$
(m\cdot D^{q/p})^{1/q}\cdot m^{1/p}=D^{1/p}\cdot m^{(1/p+1/q)}.
$
This establishes the claim for $d=2$. For $d>2$ and $1/p+1/q\geq 1$, we
 obtain the same norm and unit margin by adding $d-2$ layers with one
 unit in each layer connected to the previous layer by a unit weight.
 For $d>2$ and $1/p+1/q<1$, we show
the dependence on $H$ by recursively replacing the top unit with $H$
copies of it and adding an averaging unit on top of that. More
specifically, given the above $d=2$ layer network, we make $H$ copies of
the output unit with rectified linear activation and add a 3rd layer with
one output unit with uniform weight $1/H$ to all the copies in the 2nd
layer. Since this operation does not change the output of the network,
we have the same margin and now the norm of the network is
$
(m\cdot D^{q/p})^{1/q}\cdot (Hm^{q/p})^{1/q}\cdot (H(1/H^p))^{1/p}
=D^{1/p}\cdot m^{(1/p+1/q)}\cdot H^{1/q-1/p^*}.
$
That is, we have reduced the norm by factor $H^{1/q-1/p^*}$. By repeating
 this process, we get the geometric reduction in the norm
 $H^{(d-2)(1/q-1/p^*)}$, which concludes the proof.
\end{proof}

To understand this lower bound, first consider the bound without the
dependence on the width $H$.  We have that for any depth $d\geq 2$,
$\gamma \leq m^r D = m^r \log m$ (since $1/p\leq 1$ always) where
$r=1/p+1/q\leq 2$.  This means that for any depth $d\geq 2$ and any
$p,q$ the sample complexity of learning the class scales as
$m=\Omega(\gamma^{1/r}/\log \gamma) \geq
\tilde{\Omega}(\sqrt{\gamma})$.  This shows a polynomial dependence on
$\gamma$, though with a lower exponent than the $\gamma^2$ 
(or higher for $p>2$) dependence
in Theorem \ref{thm:l-norm}.  Still, if we now consider the complexity
control as a function of $\mu_{p,q}$ we get a sample complexity of at
least $\Omega(\mu^{d/2}/\log \mu)$, establishing that if we control
the group-norm as in \eqref{eq:mu}, we cannot avoid a sample
complexity which depends exponentially on the depth.  Note that in our
construction, all other factors in Theorem \ref{thm:l-norm}, namely
$\max_i\norm{x_i}$ and $\log D$, are logarithmic (or double-logarithmic) in
$m$.

Next we consider the dependence on the width $H$ when $1/p+1/q<1$.
Here we have to use depth $d\geq 3$, and we see that indeed as the
width $H$ and depth $d$ increase, the magnitude control $\gamma$ can
decrease as $H^{(1/p^*-1/q)(d-2)}$ without decreasing the capacity,
matching Theorem 1 up to an offset of 2 on the depth.  In particular,
we see that in this regime we can shatter an arbitrarily large number
of points with arbitrarily low $\gamma$ by using enough hidden units,
and so the capacity of $\calN^d_{p,q}$ is indeed infinite and it
cannot ensure any generalization.

\subsection{Convexity}

Finally we establish a sufficient condition for the hypothesis classes
$\calN^d_{p,q}$ to be convex.  We are referring to convexity of the
functions in the $\calN^d_{p,q}$ independent of a specific
representation.  If we consider a, possibly regularized, empirical
risk minimization problem on the weights, the objective (the empirical
risk) would never be a convex function of the weights (for depth
$d\geq 2$), even if the regularizer is convex in $w$ (which it always
is for $p,q\geq 1$).  But if we do not bound the width of the network,
and instead rely on magnitude-control alone, we will see that the
resulting hypothesis class, and indeed the complexity measure, may be
convex (with respect to taking convex combinations of functions, {\em
  not} of weights).

\begin{theorem}\label{thm:cvx}
  For any $d,p,q \geq 1$ such that $\frac{1}{q}\leq
  \frac{1}{d-1}\big(1-\frac{1}{p}\big)$, $\gamma^d_{p,q}(f)$ is a semi-norm
  in $\calN^d$.
\end{theorem}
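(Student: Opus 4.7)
The plan is to verify the semi-norm axioms. Non-negativity and $\gamma^d_{p,q}(0)=0$ are immediate, and absolute homogeneity $\gamma^d_{p,q}(\alpha f)=|\alpha|\gamma^d_{p,q}(f)$ is obtained by multiplying the output-layer weights $W_d$ by $\alpha$: no activation is applied after the output, so this realizes $\alpha f$; meanwhile $\|\alpha W_d\|_p=|\alpha|\|W_d\|_p$ and every other layer norm is unchanged. The substance of the theorem is the triangle inequality, to which the rest of the plan is devoted.

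Given $f,g\in\calN^d$ with $\gamma^d_{p,q}(f)=A^d,\ \gamma^d_{p,q}(g)=B^d$ and weights $W^{(1)},W^{(2)}$ realizing them, I would construct a wider network realizing $f+g$ by stacking $W^{(1)}_1$ on top of $W^{(2)}_1$, using block-diagonal weights in the intermediate layers, and horizontally concatenating $W^{(1)}_d$ and $W^{(2)}_d$ in the last layer. Writing $a_k,b_k$ for the per-layer norms of the two components ($\|\cdot\|_{p,q}$ inside the network, $\|\cdot\|_p$ at the output), the per-layer norms of the combined network are $c_k=(a_k^q+b_k^q)^{1/q}$ for $k<d$ and $c_d=(a_d^p+b_d^p)^{1/p}$, and its $\gamma$-norm is $\prod_k c_k$. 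By the non-negative homogeneity of $\relu$ I can redistribute each network's mass freely across its own layers, so the $a_k$'s are free subject only to $\prod_k a_k=A^d$, and likewise $\prod_k b_k=B^d$.

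The heart of the proof is choosing this redistribution so that $\prod_k c_k\leq A^d+B^d$. Write $r_k=q$ for $k<d$ and $r_d=p$; the hypothesis is exactly $\sum_k 1/r_k\leq 1$. Pick ``boundary'' exponents $r_k^{(0)}\leq r_k$ with $\sum_k 1/r_k^{(0)}=1$, for instance $r_d^{(0)}=p$ and $r_k^{(0)}=(d-1)p^*$ for $k<d$. Then the multilinear H\"older inequality applied to the two-dimensional vectors $x_k=(a_k,b_k)$ gives $\prod_k\|x_k\|_{r_k^{(0)}}\geq \prod_k a_k+\prod_k b_k=A^d+B^d$, with equality precisely when the $r_k^{(0)}$-th powers of the entries of each $x_k$ are proportional to a common two-point profile. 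Taking $b_k=B$ and $a_k=A^{d/r_k^{(0)}}B^{1-d/r_k^{(0)}}$ satisfies both the mass constraints and this H\"older-equality condition, so $\prod_k\|x_k\|_{r_k^{(0)}}=A^d+B^d$ by a direct computation. Finally, because $r_k\geq r_k^{(0)}$ and $\|v\|_r$ is non-increasing in $r$ for non-negative $v\in\R^2$, we get $c_k=\|x_k\|_{r_k}\leq\|x_k\|_{r_k^{(0)}}$ for every $k$, and multiplying yields $\prod_k c_k\leq A^d+B^d$, giving the triangle inequality.

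The step I expect to be most delicate is the redistribution: the naive balanced choice $a_k=A,\ b_k=B$ already fails at the boundary case (e.g.\ $d=3,\ q=4,\ p=2$ with $A=2,\ B=1$ gives $\prod_k c_k=\sqrt{85}>9=A^d+B^d$), so one genuinely needs the H\"older-equality profile pinned to exponents summing to exactly one, and then to transfer the bound back to the prescribed $(q,p)$ via monotonicity of $\|\cdot\|_r$ in $r$. The hypothesis $\frac{1}{q}\leq\frac{1}{d-1}\bigl(1-\frac{1}{p}\bigr)$ is exactly the condition that permits choosing such $r_k^{(0)}\leq r_k$, and this is where it enters in a non-negotiable way.
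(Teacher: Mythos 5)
Your proof is correct, but it reaches the triangle inequality by a different route than the paper. Both arguments use the same side-by-side construction (stacked first layers, block-diagonal intermediate layers, concatenated output row), and in both the hypothesis $\frac{1}{q}\leq\frac{1}{d-1}\bigl(1-\frac{1}{p}\bigr)$ is the crux; the difference is how the norm of the combined network is controlled. The paper balances each factor network so that every layer has norm $\gamma_{p,q}(\cdot)^{1/d}$, bounds the combined layers crudely by $2^{1/q}\gamma^{1/d}$ (intermediate) and $2^{1/p-1}\gamma^{1/d}$ (output, using the convex weights $\alpha,1-\alpha$), concludes convexity of the sublevel sets from $2^{(d-1)/q+1/p-1}\leq 1$, and then derives the triangle inequality from a separate abstract lemma (non-negative homogeneity plus convex sublevel sets imply subadditivity). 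You instead prove subadditivity directly, via an \emph{unbalanced} redistribution of layer norms pinned to exponents $r_k^{(0)}$ with $\sum_k 1/r_k^{(0)}=1$ chosen at the Hölder-equality profile, followed by monotonicity of $\norm{\cdot}_r$ in $r$; your identity $\prod_k\bigl(a_k^{r_k^{(0)}}+b_k^{r_k^{(0)}}\bigr)^{1/r_k^{(0)}}=A^d+B^d$ checks out, and the hypothesis enters exactly as the feasibility condition $r_k^{(0)}\leq r_k$. Your route is computationally sharper (it gives the subadditive bound with no slack and dispenses with the convexity-to-triangle lemma, convexity then following for free from the semi-norm property), while the paper's route makes the convexity of $\calN^d_{\gamma_{p,q}\leq\gamma}$ explicit with a simpler balanced choice. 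Two small glosses you share with the paper and should flag in a final write-up: the weights achieving $\gamma^d_{p,q}$ are an infimum (run the argument with $A^d+\epsilon$, $B^d+\epsilon$ and let $\epsilon\to 0$), and the degenerate cases $B=0$ (where your formula $a_k=A^{d/r_k^{(0)}}B^{1-d/r_k^{(0)}}$ is undefined) and $d=1$ (where the side-by-side construction is unavailable and the claim is just Minkowski) need a separate one-line treatment.
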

In particular, under the condition of the Theorem, $\gamma^d_{p,q}$ is
convex, and hence its sublevel sets $\calN^d_{p,q}$ are convex, and so
$\mu^d_{p,q}$ is quasi-convex (but not convex). \removed{ The condition holds
for per-unit regularization ($q=\infty$) for any $p\geq 1$, and for
overall regularization ($q=p$) whenever $p=q \geq d$.  However, }

\begin{sketch}
  To show convexity, consider two functions
  $f,g\in\calN^d_{\gamma_{p,q}\leq \gamma}$ and $0<\alpha<1$, and
  let $U$ and $V$ be the weights realizing $f$ and $g$ respectively
  with $\gamma_{p,q}(U)\leq \gamma$ and $\gamma_{p,q}(V)\leq\gamma$.  We will construct
  weights $W$ realizing $\alpha f+(1-\alpha)g$ with
  $\gamma_{p,q}(W)\leq \gamma$.  This is done by first balancing $U$
  and $V$ s.t.~at each layer
  $\norm{U_i}_{p,q}=\sqrt[d]{\gamma_{p,q}(U)}$ and $\norm{V_i}_{p,q}=\sqrt[d]{\gamma_{p,q,}(V)}$ and then
  placing $U$ and $V$ side by side, with no interaction between the
  units calculating $f$ and $g$ until the output layer.  The output
  unit has weights $\alpha U_d$ coming in from the $f$-side and
  weights $(1-\alpha)V_d$ coming in from the $g$-side.  In Appendix \ref{sec:proof-cvx} we show that under the condition in the theorem, $\gamma_{p,q}(W)\leq\gamma$.  To complete the proof, we also show $\gamma^d_{p,q}$ is homogeneous and that this is sufficient for convexity.
\end{sketch}

\removed{

\begin{theorem}
For any $D$, $H$ and $m\leq 2^{\min\{D,H\}}$, there exists $f\in \calN^{d,H}_{\gamma_{p,q}\leq \gamma}$ that $\alpha$-shatters $m$ points where if $q<p*$
$$
\alpha = \gamma D^{-\frac{1}{p}}\frac{1}{m^{\frac{1}{p}+\frac{1}{q}}}
$$
otherwise
$$
\alpha=\gamma D^{-\frac{1}{p}}\frac{\left(H^{\frac{1}{p^*}-\frac{1}{q}}\right)^{-(d-1)}}{m}.
$$
\end{theorem}
\begin{proof}
Consider set $S_m$ of $m$ vertices of the $D$ dimensional hypercube
 $\{-1,+1\}^D$. We consider two possible cases and prove the theorem for both of them. If $q\leq p^{*}$, then we construct the first layer using $m$ units with sign input weights. For each $x\in S$, we set the input weights of $1$ hidden units to be equal to $x$.
Therefore each unit has a unique weight vector consisting of $+1$ and $-1$'s and will output
a positive value if and only if the sign pattern of the input matches
that of the weight vector. In the second layer, we connect all hidden units of the first layer to a hidden unit in the second layer with the weight $\pm \mu$ where $\mu=\gamma\left(D^{-\frac{1}{p}}m^{-\frac{1}{p} - \frac{1}{q}}\right)$ where the sign is based on the input $x$ that corresponds to that hidden unit. For layers $3,\dots,d$, we have just one hidden units with non zero weights and we connect it the the hidden unit in the next layer with weight 1. It is clear that if $W$ is the weights of this network, then $\gamma_{p,q}(f_W)\leq \gamma$ and this networks $\mu$-shatters the set $S_m$.

Now if $q> p^{*}$, we construct the first layer using $H$ units where for each $x\in S_m$, $H/m$ units output positive value and the weights are similar to the previous case. For layers $2,\dots, d-1$, we set all the weights to be $H^{-\frac{1}{p}-\frac{1}{q}}$ and the weights in layer $d$ are set similar to weights of the second layer except that here $\mu=\gamma\left(D^{-\frac{1}{p}}H^{-\frac{1}{p} - \frac{1}{q}}\right)$. in the previous case. Again we have that if $W$ is the weights of the network, we have $\gamma_{p,q}(f_W)\leq \gamma$ but this network can $\alpha$-shatter the set $S_m$ where:
$$
\alpha =  \gamma D^{-\frac{1}{p}}\frac{\left(H^{\frac{1}{p^*}-\frac{1}{q}}\right)^{-(d-1)}}{m}
$$
\end{proof}
}
\section{Per-Unit and Path Regularization}\label{sec:path}

In this Section we will focus on the special case of $q=\infty$,
 i.e.~when we constrain the norm of the incoming weights of each unit
separately.  

Per-unit $\ell_1$-regularization was studied by
\cite{bartlett98,koltchinskii02,bartlett03} who showed generalization
guarantees.  A two-layer network of this form with RELU activation was
also considered by \cite{Bach14}, who studied its
approximation ability and suggested heuristics for learning
it.  Per-unit $\ell_2$ regularization in a two-layer network was
considered by \cite{Cho09}, who showed it is equivalent to using a
specific kernel. We now introduce \emph{Path regularization} and discuss
its equivalence to Per-Unit regularization.

\paragraph{Path Regularization}
Consider a regularizer which looks at the sum over all paths from
input nodes to the output node, of the product of the weights along
the path:
\begin{equation}
  \label{eq:pathr}
  \pathr_p(w) = \Bigl(
    \sum_{\vin[i]\overset{e_1}{\rightarrow}v_1\overset{e_2}{\rightarrow}v_2\cdots\overset{e_k}{\rightarrow}\vout} \prod_{i=1}^k \abs{w(e_i)}^p \Bigr)^{1/p}
\end{equation}
where $p\geq 1$ controls the norm used to aggregate the paths.  We can
motivate this regularizer as follows: if a node does not have any
high-weight paths going out of it, we really don't care much about
what comes into it, as it won't have much effect on the output.  The
path-regularizer thus looks at the aggregated influence of all the
weights.

Referring to the induced regularizer $\pathr^G_p(f) = \min_{f_{G,w}=f}
\pathr_p(w)$ (with the usual shorthands for layered graphs), we now
observe that for layered graphs, path regularization and per-unit
regularization are equivalent:
\begin{theorem}\label{thm:path-layer}
For $p\geq 1$, any $d$ and (finite or infinite) $H$, for any $f\in\calN^{d,H}$:  $\pathr_p^{d,H}(f) = \gamma^{d,H}_{p,\infty}$
\end{theorem}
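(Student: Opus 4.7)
The plan is to prove the equality by two matching inequalities, both staying within layered networks of depth $d$ and width $H$. For the first direction, $\pathr_p^{d,H}(f) \leq \gamma_{p,\infty}^{d,H}(f)$, I would establish the pointwise bound $\pathr_p(W) \leq \gamma_{p,\infty}(W)$ for every weight assignment $W$ and take infima. Expanding
\[
\pathr_p(W)^p = \sum_{i_0, i_1, \ldots, i_{d-1}} \prod_{k=1}^d \abs{W_k[i_k, i_{k-1}]}^p
\]
(the output unit corresponds to $i_d = 1$), I would peel off the sums layer by layer: summing over $i_0$ replaces $\sum_{i_0}\abs{W_1[i_1, i_0]}^p$ with $\norm{W_1[i_1,:]}_p^p$, and bounding this by $\max_{i_1}\norm{W_1[i_1,:]}_p^p = \norm{W_1}_{p,\infty}^p$ pulls the factor out of the remaining sum over $i_1,\ldots,i_{d-1}$. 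Iterating this replace-and-pull-max step at each subsequent layer yields $\pathr_p(W)^p \leq \prod_{k=1}^d \norm{W_k}_{p,\infty}^p = \gamma_{p,\infty}(W)^p$.

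For the reverse, $\gamma_{p,\infty}^{d,H}(f) \leq \pathr_p^{d,H}(f)$, I would start from any $W$ realizing $f$ and rebalance to obtain $W'$ realizing the same $f$ but with $\gamma_{p,\infty}(W') = \pathr_p(W)$. By the nonnegative homogeneity of RELU, scaling the incoming weights of an intermediate unit $i_k$ by a positive $c_{i_k}$ and dividing its outgoing weights by $c_{i_k}$ preserves the computed function; it also preserves $\pathr_p$ because every path product $\prod_k W_k[i_k,i_{k-1}]$ is invariant. Setting $c_{i_0}=1$ and defining inductively, for $k=1,\ldots,d-1$,
\[
c_{i_k}^p = 1\Big/\sum_{i_{k-1}} \abs{W_k[i_k, i_{k-1}]}^p\big/c_{i_{k-1}}^p,
\]
with $W'_k[i_k,i_{k-1}] = (c_{i_k}/c_{i_{k-1}})\,W_k[i_k,i_{k-1}]$ and $W'_d[1,i_{d-1}] = W_d[1,i_{d-1}]/c_{i_{d-1}}$, forces $\norm{W'_k[i_k,:]}_p = 1$ for every intermediate unit, hence $\norm{W'_k}_{p,\infty}=1$ for $k<d$. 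Unrolling the recursion for $c_{i_{d-1}}^p$ then yields $\norm{W'_d}_p^p = \sum_{i_0,\ldots,i_{d-1}}\prod_k \abs{W_k[i_k,i_{k-1}]}^p = \pathr_p(W)^p$, so $\gamma_{p,\infty}(W') = \pathr_p(W)$. Infimizing over $W$ gives $\gamma_{p,\infty}^{d,H}(f) \leq \pathr_p^{d,H}(f)$, and combining with the first direction proves equality.

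The main obstacle is the balancing step: I have to carry out the telescoping identification of $\norm{W'_d}_p^p$ with $\pathr_p(W)^p$ cleanly, and handle the degenerate case in which some intermediate unit has all incoming weights zero (such a dead unit outputs $0$ regardless of input and can simply be excised, leaving the function, $\pathr_p$, and $\gamma_{p,\infty}$ all unchanged, which is why the definition of $c_{i_k}$ causing division by zero is harmless). Everything else is mechanical given RELU homogeneity and the H\"older-style peeling used in the first inequality.
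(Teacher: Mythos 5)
Your argument is correct, and it is organized differently from the paper's. The paper proves a separate balancing lemma (Lemma \ref{lem:unit-norm}): starting from a $\gamma_{p,\infty}$-\emph{optimal} network it iteratively rescales the units with maximal incoming $\ell_p$ norm and propagates the ratios forward until every internal unit has unit incoming norm, and then the displayed telescoping computation shows that this particular network has $\pathr_p(\tilde w)=\gamma^{d,H}_{p,\infty}(f)$, which gives $\pathr_p^{d,H}(f)\leq\gamma^{d,H}_{p,\infty}(f)$; the reverse inequality is left implicit. You instead prove both inequalities explicitly: the pointwise H\"older-style peeling $\pathr_p(W)\leq\gamma_{p,\infty}(W)$ for the easy direction, and for the reverse a closed-form rebalancing of an \emph{arbitrary} $W$ realizing $f$, with recursively defined per-unit constants $c_{i_k}$ that simultaneously force unit row norms in layers $1,\dots,d-1$ and, because path products are invariant under the rescaling, pin $\norm{W'_d}_p$ to exactly $\pathr_p(W)$, giving $\gamma^{d,H}_{p,\infty}(f)\leq\pathr_p^{d,H}(f)$ after infimizing. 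Your explicit recursion replaces the paper's iterative max-equalization and is arguably cleaner, and your two-sided structure actually supplies the direction the paper's written proof does not spell out; your dead-unit excision correctly handles the only degeneracy (paths through such units contribute zero to $\pathr_p$, so nothing is lost --- note only that excision may strictly decrease $\gamma_{p,\infty}$, which is harmless since you only need the function and $\pathr_p$ preserved). The one point neither you nor the paper treats carefully is $H=\infty$, where a unit with infinitely many nonzero incoming weights could make the normalizing sum diverge; this is not a gap relative to the paper's own level of rigor, but a one-line remark (restricting to finitely supported rows, or noting that $\pathr_p(W)<\infty$ together with excision of irrelevant units keeps the sums finite) would close it.
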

It is important to emphasize that even for layered graphs, it is not
the case that for all weights $\pathr_p(w)=\gamma_{p,\infty}(w)$.
E.g., a high-magnitude edge going into a unit with no non-zero
outgoing edges will affect $\gamma_{p,\infty}(w)$ but not
$\pathr_p(w)$, as will having high-magnitude edges on different layers
in different paths.  In a sense path regularization is as more
careful regularizer less fooled by imbalance.  Nevertheless, in the
proof of Theorem \ref{thm:path-layer} in Appendix \ref{sec:path-layer}, we show we
can always balance the weights such that the two measures are equal.

The equivalence does not extend to non-layered graphs, since the
lengths of different paths might be different.  Again, we can think of
path regularizer as more refined regularizer taking into account the
local structure.  However, if we consider all DAGs of depth at most
$d$ (i.e.~with paths of length at most $d$), the notions are again
equivalent (see proof in Appendix \ref{sec:proof-path-dag}):
\begin{theorem}\label{thm:path-dag}
  For any $p\geq 1$ and any $d$: $\displaystyle \gamma^d_{p,\infty}(f) =
  \min_{\textrm{$G\in \DAG(d)$}} \pathr^G_p(f)$.
\end{theorem}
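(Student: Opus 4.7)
My plan is to prove both inequalities separately.

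The direction $\min_{G \in \DAG(d)} \pathr^G_p(f) \leq \gamma^d_{p,\infty}(f)$ is immediate from Theorem~\ref{thm:path-layer}: since the layered graph $\layer(d,\infty)$ is itself a DAG in $\DAG(d)$, we have $\min_{G \in \DAG(d)} \pathr^G_p(f) \leq \pathr^{\layer(d,\infty)}_p(f)$, which equals $\gamma^{d,\infty}_{p,\infty}(f) = \gamma^d_{p,\infty}(f)$ by Theorem~\ref{thm:path-layer}.

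For the reverse direction, I would take any realization of $f$ by a DAG $G \in \DAG(d)$ with weights $w$ and convert it into a realization in $\layer(d,\infty)$ whose $\gamma_{p,\infty}$ is at most $\pathr^G_p(w)$. The conversion ``layerizes'' $G$: assign each node $v \in G$ to a layer $\ell(v) \in \{1,\ldots,d\}$ via its longest-path distance from the inputs (with $\ell(\vout)=d$), place a corresponding node at that layer in $\layer(d,\infty)$, and for each edge $u \to v$ in $G$ whose layer gap $\ell(v)-\ell(u)$ exceeds $1$, insert a chain of weight-$1$ ``identity'' RELU units at the intervening layers. By the idempotency of RELU applied to the non-negative outputs of earlier RELU layers, the inserted chain does not alter the propagated value, so the layered realization still computes $f$. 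The crucial point is that every input-to-output path in $\layer(d,\infty)$ is in bijection with a path in $G$ having the same product of absolute weights (each inserted identity edge contributes a factor of $1$), so $\pathr^{\layer(d,\infty)}_p$ of the new weights equals $\pathr^G_p(w)$. Applying Theorem~\ref{thm:path-layer} to the layered realization then yields $\gamma^d_{p,\infty}(f) \leq \pathr^G_p(w)$, as needed.

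The main technical obstacle is handling edges $\vin[i] \to v$ that go directly from an input node to an internal node at layer $\ell(v) > 1$: since inputs may be negative, a plain identity chain through RELU units would only propagate $[x[i]]_+$ rather than $x[i]$. My fix is to split the signal at layer $1$ into two non-negative streams $[x[i]]_+$ and $[-x[i]]_+$, created by two layer-$1$ units with input weights $+1$ and $-1$, propagate both streams forward through parallel weight-$1$ identity chains, and then combine them at $v$ with weights $+w$ and $-w$ to recover $w\cdot x[i]$. Verifying that, under the structural form of $\DAG(d)$---in particular the longest-path layer assignment $\ell(\cdot)$ induced by $G$---this duplication does not inflate the path regularizer (for instance, because the relevant input edges already land at layer $1$, or because the duplication is accompanied by a compensating rescaling) is the delicate step deferred to the appendix.
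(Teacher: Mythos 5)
Your overall route is the same as the paper's (Appendix C.2): prove the easy inequality by noting a layered graph is itself in $\DAG(d)$, and prove the hard one by converting an arbitrary DAG realization into a depth-$d$ layered realization with the same path regularizer and then invoking Theorem~\ref{thm:path-layer}. The paper implements the conversion by repeatedly subdividing an edge $(u\rightarrow v)$ with $\din(u)+\dout(v)<d-1$ (Lemma~\ref{lem:path-edge}), splitting its weight as $\sqrt{\abs{w}}\cdot\sign(w)\sqrt{\abs{w}}$, whereas you layerize in one shot via the longest-path layer assignment and weight-one identity chains; for edges leaving \emph{internal} RELU nodes these two devices are interchangeable (there $o(u)\geq 0$, so idempotency applies and the path products are preserved), so up to that point your argument is sound and essentially the paper's.

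The step you defer, however, is a genuine gap, and it cannot be closed in the way you hope. When an edge $\vin[i]\rightarrow v$ must cross more than one layer, your two-rail gadget has to reproduce $w\,x[i]$ as $w[x[i]]_+-w[-x[i]]_+$; since $[x[i]]_+$ and $[-x[i]]_+$ are supported on disjoint sign regions, the end-to-end coefficient magnitude along \emph{each} rail is forced to equal $\abs{w}$, so the $p$-th-power path sum attached to that edge doubles, and no rescaling along the chains can help (homogeneity only shifts magnitude along a path, never across the two rails). Worse, the inflation is sometimes unavoidable for \emph{any} construction: take $D=1$, $f(x)=x$, and $G\in\DAG(2)$ the single edge $\vin[1]\rightarrow\vout$ with weight $1$ (pad with a zero-weight length-two path if you insist on depth exactly $2$), so $\pathr^G_1(f)=1$; yet any two-layer RELU network computing $f$ on all of $\R$ must satisfy $\sum_{j:W_1[j]>0}W_2[j]W_1[j]=1$ and $\sum_{j:W_1[j]<0}W_2[j]W_1[j]=1$, which forces $\norm{W_2}_1\geq 2/\norm{W_1}_{1,\infty}$ and hence $\gamma^2_{1,\infty}(f)=2$. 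So for DAGs containing such input-skip edges the deferred verification cannot succeed as stated (indeed the claimed equality itself needs a caveat there). It is worth knowing that the paper's own proof silently elides exactly this case: its function-preservation computation uses $[\sqrt{\abs{w}}\,o(u)]_+=\sqrt{\abs{w}}\,o(u)$, which is valid only when $o(u)\geq 0$, i.e.\ when $u$ is not an input node. Your argument is complete, and matches the paper, for DAGs in which every edge out of an input node lands in layer $1$ of the longest-path layering; the remaining case is not repairable by the two-rail trick.
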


In particular, for any graph $G$ of depth $d$, we have that
$\pathr^G_p(f) \geq\gamma^d_{p,\infty}(f)$.  Combining this
observation with Corollary \ref{cor:noH} allows us to immediately obtain a
generalization bound for path regularization on any, even non-layered,
graph:
\begin{corollary}
  For any graph $G$ of depth $d$ and any set
  $S=\{x_1,\dots,x_m\}\subseteq\R^D$:
$$\calR_m(\calN^G_{\pathr_1\leq \pathr}) \leq \sqrt{\frac{4^{d-1} \pathr^2
    \cdot 4\log(2D) \sup \norm{x_i}_\infty^2}{m}}$$
\end{corollary}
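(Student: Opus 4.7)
The plan is to reduce the statement to a direct application of Corollary \ref{cor:noH} via the inclusion noted immediately before the corollary, namely that for any graph $G$ of depth at most $d$, $\pathr^G_p(f) \geq \gamma^d_{p,\infty}(f)$ (which is itself an immediate consequence of Theorem \ref{thm:path-dag}). Hence any $f\in\calN^G$ with $\pathr^G_1(f)\leq \pathr$ satisfies $\gamma^d_{1,\infty}(f)\leq \pathr$, so we get the class inclusion
\[
\calN^G_{\pathr_1\leq \pathr} \;\subseteq\; \calN^{d,\relu}_{\gamma_{1,\infty}\leq \pathr}.
\]
Since Rademacher complexity is monotone under inclusion of hypothesis classes (evaluated on the same sample $S$), this immediately gives $\calR_m(\calN^G_{\pathr_1\leq \pathr}) \leq \calR_m(\calN^{d,\relu}_{\gamma_{1,\infty}\leq \pathr})$.

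Next I would instantiate Corollary \ref{cor:noH} with $p=1$ and $q=\infty$. The condition $q\leq p^*$ required by the corollary becomes $\infty\leq \infty$, which holds, and $p=1$ is in the range $1\leq p\leq 2$ for which the second inequality applies. I then plug in the resulting constants: $p^*=\infty$, $[1/p^*-1/q]_+ = [0-0]_+ = 0$ so the factor $H^{[1/p^*-1/q]_+}$ disappears and $(2\cdot 1)^{2(d-1)} = 4^{d-1}$; $\min\{p^*,4\log(2D)\} = 4\log(2D)$; and $\norm{x_i}_{p^*} = \norm{x_i}_\infty$. With $\gamma=\pathr$ this gives exactly
\[
\calR_m(\calN^{d,\relu}_{\gamma_{1,\infty}\leq \pathr}) \;\leq\; \sqrt{\frac{4^{d-1}\,\pathr^2\cdot 4\log(2D)\,\sup_i\norm{x_i}_\infty^2}{m}},
\]
which is the bound claimed.

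There is essentially no obstacle beyond invoking the two prior results correctly; the only thing to be careful about is that the depth used in Corollary \ref{cor:noH} is the same $d$ as the depth of $G$ (so that the inclusion via Theorem \ref{thm:path-dag} is valid), and that the corollary applies to infinite-width layered networks, which is fine since $\calN^d = \calN^{d,\infty}$ and we are only using the upper bound side of the inclusion.
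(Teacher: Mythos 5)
Your proposal is correct and is essentially the paper's own argument: the paper derives this corollary exactly by noting $\pathr^G_p(f)\geq\gamma^d_{p,\infty}(f)$ (via Theorem \ref{thm:path-dag}) and then applying Corollary \ref{cor:noH} with $p=1$, $q=\infty$, which is precisely your class-inclusion plus instantiation of constants. Your bookkeeping of the constants ($H$-factor vanishing, $2^{2(d-1)}=4^{d-1}$, $\min\{p^*,4\log(2D)\}=4\log(2D)$, $\norm{x_i}_{p^*}=\norm{x_i}_\infty$) matches the stated bound.
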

Note that in order to apply Corollary \ref{cor:noH} and obtain a
width-independent bound, we had to limit ourselves to $p=1$.  We
further explore this issue next.

\paragraph{Capacity}

As was previously noted, size-independent generalization bounds for
bounded depth networks with bounded per-unit $\ell_1$ norm have long
been known (and make for a popular homework problem).  These
correspond to a specialization of Corollary \ref{cor:noH} for the case
$p=1,q=\infty$.  Furthermore, the kernel view of \cite{Cho09} allows
obtaining size-independent generalization bound for {\em two-layer}
networks with bounded per-unit $\ell_2$ norm (i.e.~a single infinite
hidden layer of all possible unit-norm units, and a bounded
$\ell_2$-norm output unit).  However, the lower bound of Theorem
\ref{thm:shattering} establishes that for any $p>1$, once we go beyond
two layers, we cannot ensure generalization without also controlling
the size (or width) of the network.

\paragraph{Convexity}
An immediately consequence of Theorem \ref{thm:cvx} is that per-unit
regularization, if we do not constrain the network width, is convex
for any $p\geq 1$.  In fact, $\gamma^d_{p,\infty}$ is a (semi)norm.
However, as discussed above, for depth $d>2$ this is meaningful only
for $p=1$, as $\gamma^d_{p,\infty}$ collapses for $p>1$.

\paragraph{Hardness} Since the classes $\calN^d_{1,\infty}$ are
convex, we might hope that this might make learning computationally
easier.  Indeed, one can consider functional-gradient or boosting-type
strategies for learning a predictor in the class \citep{lee96}.  However, as
\citet{Bach14} points out, this is not so easy as it
requires finding the best fit for a target with a RELU unit, which is
not easy.  Indeed, applying results on hardness of learning
intersections of halfspaces, which can be represented with small
per-unit norm using two-layer networks, we can conclude that, subject
to certain complexity assumptions, it is not possible to
efficiently PAC learn $\calN^d_{1,\infty}$, even for depth $d=2$ when $\gamma_{1,\infty}$ increases superlinearly:
\begin{corollary}\label{cor:hard1}
  Subject to the the strong random CSP assumptions in \cite{Daniely14}, it is
  not possible to efficiently PAC learn (even improperly) functions
  $\{\pm 1\}^D \rightarrow \{ \pm 1 \}$ realizable with unit margin by
  $\calN^2_{1,\infty}$ when $\gamma_{1,\infty}=\omega(D)$ (e.g.~when
  $\gamma_{1,\infty}=D \log D$). Moreover, subject to intractability of 
  $\tilde{Q}(D^{1.5})$-unique shortest vector problem, for any $\epsilon>0$,
  it is not possible to efficiently PAC learn (even improperly) functions
  $\{\pm 1\}^D \rightarrow \{ \pm 1 \}$ realizable with unit margin by
  $\calN^2_{1,\infty}$ when $\gamma_{1,\infty}=D^{1+\epsilon}$.
\end{corollary}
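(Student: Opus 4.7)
The plan is to reduce from known hardness results for PAC learning intersections of halfspaces. The key ingredient is a realization lemma: any intersection $g \colon \{\pm 1\}^D \to \{\pm 1\}$ of $k$ halfspaces with small integer weights can be represented with unit margin by a two-layer RELU network of norm $\gamma_{1,\infty} = O(Dk)$. Granted such a lemma, the corollary is immediate by plugging in the hardness parameters of the two cited halfspace results.

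For the realization step, given a target intersection defined by $g(x) = +1$ iff $\langle w_i,x\rangle \geq \theta_i$ for every $i=1,\ldots,k$, I would consider the candidate
\[
f(x) \;=\; 1 \;-\; 2 \sum_{i=1}^k \relu\bigl(\theta_i - \langle w_i,x\rangle\bigr)
\]
and verify two facts. First, since $x\in\{\pm 1\}^D$ and the $w_i,\theta_i$ are integers, each argument $\theta_i - \langle w_i,x\rangle$ is integer-valued, so the RELU sum is exactly $0$ when every halfspace constraint holds and is at least $1$ otherwise; hence $f(x)g(x)\geq 1$. Second, to express $f$ as a layered two-layer RELU network I would add a bias input coordinate plus one always-on hidden unit carrying the constant $+1$ into the output. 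Counting norms then gives $\norm{W_1}_{1,\infty} = \max_i (\norm{w_i}_1 + |\theta_i|) = O(D)$ and $\norm{W_2}_{1,\infty} = 2k + 1$, so $\gamma_{1,\infty}(f) = O(Dk)$.

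With the lemma in hand, the two parts of the corollary follow by invoking the appropriate halfspace hardness result with an appropriate choice of $k$. For the first statement, \cite{Daniely14}'s hardness for improperly PAC learning intersections of $\omega(1)$ halfspaces under strong random CSP assumptions applies with $k=\omega(1)$ (e.g.\ $k = \log D$), yielding hardness for $\gamma_{1,\infty} = \omega(D)$ (e.g.\ $D\log D$). For the second statement, the Klivans--Sherstov hardness for intersections of $D^{\epsilon'}$ halfspaces assuming intractability of $\tilde{O}(D^{1.5})$-unique SVP applies with $k = D^{\epsilon'}$, producing hardness at $\gamma_{1,\infty} = D^{1+\epsilon'}$, and rescaling $\epsilon'$ recovers the stated $D^{1+\epsilon}$.

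The main obstacle is matching parameters: one must verify that the hard halfspaces delivered by the cited reductions really have coefficient magnitudes compatible with the $O(D)$ per-unit $\ell_1$ bound assumed above. If the hard instances carry larger integer coefficients, the bound degrades and one must either restrict to a hard sub-family with controlled coefficients, rescale each halfspace while absorbing a controlled loss in the margin, or slightly modify the two-layer construction to keep a single unit's norm proportional to $D$. Everything else is a routine ``AND of halfspaces'' reduction.
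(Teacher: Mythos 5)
Your proposal is correct and follows essentially the same route as the paper: the paper's Theorem~\ref{thm:hardness} likewise realizes the intersection of $k$ homogeneous halfspaces with unit margin inside $\calN^2_{\gamma_{p,q}\leq\gamma}$ (using the integrality-based gadget $[\inner{w_i}{x}]_+-[\inner{w_i}{x}-1]_+$ per halfspace, giving $\gamma_{1,\infty}\leq 4Dk^2$, slightly looser than your $O(Dk)$) and then invokes the Daniely and Klivans--Sherstov hardness results with $k=\omega(1)$ and $k=D^{\epsilon}$ exactly as you do. The obstacle you flag about coefficient magnitudes is a non-issue and is what closes the argument in the paper: the hard family from \cite{Daniely14}, Section 7.2, consists of homogeneous halfspaces with normals in $\{\pm 1\}^D$ over data in $\{\pm 1\}^D$ (and similarly for the unique-SVP-based instances), so the per-unit $\ell_1$ norm of each first-layer unit really is $O(D)$ and your parameter matching goes through directly.
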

This is a corollary of Theorem \ref{thm:hardness} in the Appendix
\ref{sec:hardness}.  Either versions of corollary \ref{cor:hard1} precludes the
possibility of learning in time polynomial in $\gamma_{1,\infty}$,
though it still might be possible to learn in $\textrm{poly}(D)$ time
when $\gamma_{1,\infty}$ is sublinear.

\removed{
For any function $f:\calX\rightarrow \R$, the $\ell_p$-path-norm is
defined as:
$$
\norm{f}_{p,\gpath} = \min_{g_{G,w}\in \calN^{\DAG(d)}; f=g}
\left(\sum_{\{e_1,\dots,e_k\}\in P_G} \left\lvert \prod_{i=1}^k
w(e_i)\right\rvert ^p\right)^\frac{1}{p}
$$
where $P_g$ is the set of all paths in network $g$ from input nodes to
the output node. We refer to class of feedforward neural networks with
bounded $\ell_p$-path-norm as $\calN^{d}_{(\ell_p)path,B}$ where $B$
is the upper bound on the $\ell_p$-path-norm.

\begin{claim}
$\norm{.}_{p,\gpath}$ is a norm.
\end{claim}
\begin{proof}
First, note that $\norm{0}_{p,\gpath}=0$ because the network with zero weights will always output zero. To prove the absolute homogeneity, for any scalar $\alpha \in \R$ and any function $f$, let $g_{G,w}\in \calN^{\DAG(d)}$ be a function such that:
$$
\norm{f}_{p,\gpath} =
\left(\sum_{\{e_1,\dots,e_k\}\in P_G} \left\lvert \prod_{i=1}^k
w(e_i)\right\rvert ^p\right)^\frac{1}{p}
$$
It is clear that scaling the weights of incoming edges to $\vout$ causes the scaling of function $g_{G,w}$ and vice versa. Therefore we have the absolute homogeneity. Now, we prove the triangle inequality property. Consider any two functions $f,g:\calX\rightarrow \R$ and their network realizations with minimum path-complexity $f_{G,w},g_{\widetilde{G},\widetilde{w}}$. Now the function $f+g$ can be realized with a network that is a union over $G$ and $\widetilde{G}$ where the corresponding input and output vertices are joined together and all other vertices are presented separately. For such a network, since the set of paths from input to output is the union of such sets for $G$ and $\widetilde{G}$, we have that:
$$
\norm{f+g}_{p,\gpath} \leq \norm{f}_{p,\gpath}+\norm{g}_{p,\gpath}
$$
\end{proof}}

\paragraph{Sharing} We conclude this Section with an observation on
the type of networks obtained by per-unit, or equivalently path,
regularization.
\begin{theorem}\label{thm:opt-tree}
  For any $p\geq 1$ and $d>1$ and any $f\in\calN^d$, there exists a
  layered graph $G(V,E)$ of depth $d$, such that $f\in\calN^G$ and
  $\gamma^G_{p,\infty}(f)=\pathr^G_p(f)=\gamma^d_{p,\infty}(f)$, and
  the out-degree of every internal (non-input) node in $G$ is one.
  That is, the subgraph of $G$ induced by the non-input vertices is a
  tree directed toward the output vertex.
\end{theorem}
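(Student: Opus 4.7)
The plan is to take weights $W$ on $\layer(d,\infty)$ that realize $f$ with $\gamma_{p,\infty}(W)=\gamma^d_{p,\infty}(f)$ and to ``unshare'' each internal unit so that no unit drives more than one successor. I would process layers $k=d-1,d-2,\ldots,1$ in order; at step $k$, for each unit $v$ in layer $k$ whose out-degree is $r>1$, replace $v$ by $r$ identical copies, each carrying the same incoming weight vector $W_k[v,:]$ and each retaining exactly one of $v$'s original outgoing edges with its original weight. Each copy computes the same value as $v$, so every downstream unit receives the same total input as before and the function is preserved; after $d-1$ passes and after discarding units disconnected from $\vout$, the resulting layered graph $G$ has depth $d$, every non-input vertex has out-degree one, and the subgraph induced by non-input vertices is a tree directed toward $\vout$.

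The key quantitative observation is that this unsharing preserves $\gamma_{p,\infty}$. Duplicating $v$ leaves $\gnorm{W_k}_{p,\infty}$ unchanged because every copy has the same $\ell_p$ norm $\gnorm{W_k[v,:]}_p$ as $v$, and it leaves the incoming-weight vector of every unit in layer $k+1$ unchanged up to relabeling of the source. Hence if $w$ denotes the final weights on $G$, then $\gamma_{p,\infty}(w)=\gamma_{p,\infty}(W)=\gamma^d_{p,\infty}(f)$, which gives $\gamma^G_{p,\infty}(f)\leq\gamma^d_{p,\infty}(f)$; the reverse inequality is immediate since $G$ embeds in $\layer(d,\infty)$, so $\gamma^G_{p,\infty}(f)=\gamma^d_{p,\infty}(f)$.

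For the path norm I would use the generic backward induction that holds on any layered graph: set $a(u)^p\defeq\sum_{\text{paths into }u}\prod_e\abs{w(e)}^p$; then $a(u)^p=\sum_{v\to u}\abs{w(v\to u)}^p a(v)^p$ together with $\sum_{v\to u}\abs{w(v\to u)}^p=\gnorm{w_{\text{in}}(u)}_p^p\leq \gnorm{W_k}_{p,\infty}^p$ yields $a(u)^p\leq \gnorm{W_k}_{p,\infty}^p\max_v a(v)^p$, and inductively $\pathr_p(w)^p=a(\vout)^p\leq\prod_{k=1}^d \gnorm{W_k}_{p,\infty}^p=\gamma_{p,\infty}(w)^p$. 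Combining $\pathr_p^G(f)\leq\pathr_p(w)\leq\gamma^d_{p,\infty}(f)$ with the matching lower bound $\pathr_p^G(f)\geq\gamma^d_{p,\infty}(f)$ from Theorem \ref{thm:path-dag} closes the three-way equality.

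The only subtle step is the norm-preservation argument, and it succeeds precisely because $q=\infty$: a $\sup$ is blind to duplicating a row, whereas any finite-$q$ aggregation would grow. The ``unshare into a tree'' construction therefore genuinely exploits per-unit (equivalently, path) regularization and would fail for other group norms.
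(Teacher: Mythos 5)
Your proposal is correct and follows essentially the same route as the paper: the paper's proof also ``unshares'' each internal vertex of out-degree $r>1$ into $r$ copies that inherit the same incoming weights and keep one outgoing edge each, preserving the computed function and yielding a tree over the non-input vertices. You are in fact more explicit than the paper about why the construction preserves the complexity measures (duplicating a row does not change the $(p,\infty)$ group norm, $\pathr_p(w)\leq\gamma_{p,\infty}(w)$ on layered graphs, and the matching lower bound from Theorem \ref{thm:path-dag}), details the paper's proof leaves implicit.
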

What the Theorem tells us is that we can realize every function as a
tree with optimal per-unit norm.  If we think of learning with an
infinite fully-connected layered network, we can always restrict
ourselves to models in which the non-zero-weight edges form a tree.
This means that when using per-unit regularization we have no
incentive to ``share'' lower-level units---each unit will only have a
single outgoing edge and will only be used by a single down-stream
unit.  This seems to defy much of the intuition and power of using
deep networks, where we expect lower layers to represent generic
feature useful in many higher-level features.  In effect, we are not
encouraging any transfer between learning different aspects of the
function (or between different tasks or classes, if we do have
multiple output units).  Per-unit regularization therefore misses out
on much of the inductive bias that we might like to impose when using
deep learning (namely, promoting sharing).

\begin{proof}[of Theorem \ref{thm:opt-tree}]
For any $f_{G,w}\in \calN^{\DAG(d)}$, we show how to construct such $\widetilde{G}$ and $\widetilde{w}$. 
We first sort the vertices of $G$ based on topological ordering such that the out-degree of the first vertex is zero.
Let $G_0=G$ and $w_0=w$. At each step $i$, we first set $G_i=G_{i-1}$ and $w_i=w_{i-1}$ and then pick the vertex $u$ that is the $i$th vector in the topological ordering.
If the out-degree of $u$ is at most 1. Otherwise, for any edge $(u\rightarrow v)$ we create a copy of vertex $u$ that we call it $u_v$, add the edge $(u_v\rightarrow v)$ to $G_i$ and connect all incoming edges of $u$ with the same weights to every such $u_v$ and finally we delete the vertex $u$ from $G_i$ together with all incoming and outgoing edges of $u$. It is easy to indicate that $f_{G_{i},w_i}=f_{G_{i-1},w_{i-1}}$. After at most $|V|$ such steps, all internal nodes have out-degree one and hence the subgraph induced by non-input vertices will be a tree.
\end{proof}

\section{Overall Regularization}
\label{sec:overall}
In this Section, we will focus on ``overall'' $\ell_p$ regularization,
corresponding to the choice $q=p$, i.e.~when we bound the overall
(vectorized) norm of all weights in the system:
$$\mu_{p,p}(w)=\Bigl( \sum_{e\in E} \abs{w(e)}^p \Bigr)^{1/p}.$$

\paragraph{Capacity}
For $p\leq 2$, Corollary \ref{cor:noH} provides a generalization
guarantee that is independence of the width---we can conclude that if
we use weight decay (overall $\ell_2$ regularization), or any tighter
$\ell_p$ regularization, there is no need to limit ourselves to
networks of finite size (as long as the corresponding dual-norm of the
inputs are bounded).  However, in Section \ref{sec:tight} we saw that
with $d \geq 3$ layers, the regularizer degenerates and leads to
infinite capacity classes if $p>2$.  In any case, even if we bound the
overall $\ell_1$-norm, the complexity increases exponentially with the
depth.

\paragraph{Convexity} The conditions of Theorem \ref{thm:cvx} for
convexity of $\calN^d_{2,2}$ are ensured when $p \geq d$.  For depth
$d=1$, i.e.~a single unit, this just confirms that
$\ell_p$-regularized linear prediction is convex for $p\geq 1$.  For
depth $d=2$, we get convexity with $\ell_2$ regularization, but not
$\ell_1$.  For depth $d>2$ we would need $p>d\geq 3$, however for such
values of $p$ we know from Theorem \ref{thm:shattering} that
$\calN^d_{p,p}$ degenerates to an infinite capacity class if we do not
control the width (if we do control the width, we do not get
convexity).  This leaves us with $\calN^2_{2,2}$ as the interesting
convex class.  Below we show an explicit convex characterization of
$\calN^2_{2,2}$ by showing it is equivalent to so-called ``convex neural
nets''.

{\em Convex Neural Nets} \citep{Bengio05} over inputs in $\R^D$ are
two-layer networks with a fixed infinite hidden layer consisting of
all units with weights $w\in\GG$ for some base class $\GG\in\R^D$, and
a second $\ell_1$-regularized layer.  Since over finite data the
weights in the second layer can always be taken to have finite support
(i.e.~be non-zero for only a finite number of first-layer units), and
we can approach any function with countable support, we
can instead think of a network in $\calN^2$ where the bottom layer is
constraint to $\GG$ and the top layer is $\ell_1$ regularized.
Focusing on $\GG=\{ w \,|\, \norm{w}_p \leq 1 \}$, this corresponds to
imposing an $\ell_p$ constraint on the bottom layer, and $\ell_1$
regularization on the top layer and yields the following complexity
measure over $\calN^2$:
\begin{equation}
  \label{eq:convexNN}
  \convexnn_p(f) = \inf_{f_{\layer(d),W}=f, \textrm{s.t.} \forall_j
    \norm{W_1[j,:]}_p \leq 1} \norm{W_2}_1.
\end{equation}
This is similar to per-unit regularization, except we impose different
norms at different layers (if $p\not=1$).  We can see that
$\calN^2_{\convexnn_p \leq \convexnn} = \convexnn \cdot
\overline{\conv}(\sigma(\GG))$, and is thus convex for any $p$.
Focusing on RELU activation we have the equivalence:
\begin{theorem}
  $\displaystyle \mu^2_{2,2}(f) = 2 \convexnn_2(f).$
\end{theorem}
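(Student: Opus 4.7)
The plan is to establish the claimed equality by proving both matching inequalities, each of which combines the scalar AM-GM bound $a^{2}+b^{2}\geq 2ab$ with the non-negative homogeneity of RELU, which lets us rebalance norms between the two layers on a per-unit basis. Throughout I will exploit the fact that, since a single unit's contribution $w\,\relu(u^{\top}x)$ is invariant under the rescaling $u\mapsto cu$, $w\mapsto w/c$ for $c>0$, the AM-GM inequality $\norm{u}_{2}^{2}+w^{2}\geq 2\abs{w}\norm{u}_{2}$ can always be driven to equality by choosing $c^{2}=\abs{w}/\norm{u}_{2}$.

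For the $\leq$ direction, I would begin with a convex-neural-net representation $f=\sum_{j}\alpha_{j}\,\relu(u_{j}^{\top}x)$ with $\norm{u_{j}}_{2}\leq 1$ and $\sum_{j}\abs{\alpha_{j}}$ within $\epsilon$ of $\convexnn_{2}(f)$, and build an explicit depth-$2$ realization by setting $W_{1}[j,:]=\sqrt{\abs{\alpha_{j}}}\,u_{j}$ and $W_{2}[j]=\sqrt{\abs{\alpha_{j}}}\,\sign(\alpha_{j})$. Homogeneity gives $W_{2}[j]\,\relu(W_{1}[j,:]^{\top}x)=\alpha_{j}\,\relu(u_{j}^{\top}x)$, so this network computes $f$, while $\norm{W_{1}[j,:]}_{2}^{2}+W_{2}[j]^{2}=\abs{\alpha_{j}}\norm{u_{j}}_{2}^{2}+\abs{\alpha_{j}}\leq 2\abs{\alpha_{j}}$. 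Summing in $j$ and letting $\epsilon\to 0$ yields $\mu^{2}_{2,2}(f)\leq 2\convexnn_{2}(f)$.

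For the $\geq$ direction, I would take an arbitrary depth-$2$ realization $f=\sum_{j}W_{2}[j]\,\relu(W_{1}[j,:]^{\top}x)$, discarding units with $W_{1}[j,:]=0$, which contribute nothing to $f$. Per-unit AM-GM gives $\norm{W_{1}[j,:]}_{2}^{2}+W_{2}[j]^{2}\geq 2\abs{W_{2}[j]}\,\norm{W_{1}[j,:]}_{2}$. Setting $u_{j}=W_{1}[j,:]/\norm{W_{1}[j,:]}_{2}$ and $\alpha_{j}=W_{2}[j]\,\norm{W_{1}[j,:]}_{2}$, homogeneity exhibits a valid convex-NN decomposition of $f$ with unit-norm first-layer vectors, hence $\convexnn_{2}(f)\leq\sum_{j}\abs{\alpha_{j}}=\sum_{j}\abs{W_{2}[j]}\,\norm{W_{1}[j,:]}_{2}$. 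Chaining the two bounds gives $\mu_{2,2}(W)^{2}\geq 2\convexnn_{2}(f)$ for every admissible $W$, and taking the infimum over $W$ completes the proof.

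I do not foresee a genuine obstacle: once each unit is viewed through the lens of rebalancing to the AM-GM equality point $\norm{W_{1}[j,:]}_{2}=\abs{W_{2}[j]}$, both inequalities are essentially immediate. The only small care needed is to discard zero-weight units and to allow countably infinite support, neither of which weakens the bounds, since on any finite dataset the convex-NN infimum is already approached by finite-support representations.
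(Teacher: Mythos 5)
Your proof is correct and uses exactly the paper's ingredients — the per-unit AM--GM inequality $\norm{W_1[j,:]}_2^2+W_2[j]^2\geq 2\abs{W_2[j]}\,\norm{W_1[j,:]}_2$ combined with RELU homogeneity to rebalance and to normalize first-layer rows — merely organized as two explicit inequalities (with the approximating/rescaling constructions spelled out) instead of the paper's single chain of equalities. This matches the paper's argument, so no further comparison is needed.
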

That is, overall $\ell_2$ regularization with two layers is equivalent
to a convex neural net with $\ell_2$-constrained units on the bottom
layer and $\ell_1$ (not $\ell_2$!) regularization on the output.
\begin{proof}We can calculate:
\begin{align}
\min_{f_W=f}
\mu_{2,2}^2(W)&=\min_{f_W=f}\sum_{j=1}^{H}\left(\sum_{i=1}^{D}|W_1[j,i]|^2+|W_2[j]|^2\right)
\notag \\
&=\min_{f_W=f}
\sum_{j=1}^{H}2\sqrt{\sum\nolimits_{i=1}^{D}|W_1[j,i]|^2}\cdot|W_2[j]|
\label{eq:convexnnproof1} \\
&=2 \min_{f_W=f} \sum_{j=1}^{H}\left|W_2[j]\right|\quad \text{s.t.}\quad 
\sqrt{\sum\nolimits_{i=1}^{D}|W_1[j,i]|^2}\leq 1. \label{eq:convexnnproof2}
\end{align}
Here \eqref{eq:convexnnproof1} is the arithmetic-geometric mean
inequality for which we can achieve equality by balancing the weights
(as in Claim \ref{clm:mugamma}) and \eqref{eq:convexnnproof2} again
follows from the homogeneity of the RELU which allows us to rebalance
the weights.
\end{proof}

\paragraph{Hardness} As with $\calN^d_{1,\infty}$, we might hope that the convexity of
$\calN^2_{2,2}$ might make it computationally easy to learn.  However,
by the same reduction from learning intersection of halfspaces
(Theorem \ref{thm:hardness} in Appendix \ref{sec:hardness}) we can
again conclude that we cannot learn in time polynomial in $\mu^2_{2,2}$:

\begin{corollary}\label{cor:hard2}
  Subject to the the strong random CSP assumptions in \cite{Daniely14}, it is
  not possible to efficiently PAC learn (even improperly) functions
  $\{\pm 1\}^D \rightarrow \{ \pm 1 \}$ realizable with unit margin by
  $\calN^2_{p,p}$ when $\mu^2_{p,p}=\omega(D^{\frac{1}{p}})$. (e.g.~when
  $\gamma_{1,\infty}=D \log D$). Moreover, subject to intractability of 
  $\tilde{Q}(D^{1.5})$-unique shortest vector problem, for any $\epsilon>0$,
  it is not possible to efficiently PAC learn (even improperly) functions
  $\{\pm 1\}^D \rightarrow \{ \pm 1 \}$ realizable with unit margin by
  $\calN^2_{1,\infty}$ when $\gamma_{1,\infty}=D^{\frac{1}{p}+\epsilon}$.
\end{corollary}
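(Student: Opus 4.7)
The plan is to mirror the reduction used for Corollary \ref{cor:hard1}: Theorem \ref{thm:hardness} in Appendix \ref{sec:hardness} is a generic hardness statement obtained by encoding intersections of halfspaces as depth-$2$ RELU networks, so the only work specific to $\calN^2_{p,p}$ is to calculate how efficiently such intersections can be expressed in the overall $\ell_p$-norm $\mu_{p,p}$.

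Concretely, given an intersection of $k$ halfspaces $\{\langle a_i,x\rangle\geq b_i\}_{i=1}^{k}$ on $\{\pm 1\}^D$ with $a_i\in\{\pm 1\}^D$ that is realized with unit margin on the Boolean cube, I would construct a two-layer RELU network in which the $k$ first-layer units compute $\relu(\langle a_i,x\rangle-b_i+c)$ for a small universal shift $c$, and the single output unit aggregates them with $k$ weights of constant magnitude using a standard ``sum-of-violations'' AND-gadget (the output is a constant minus a RELU-weighted sum that vanishes inside the intersection and grows linearly in the number of violated constraints outside). Each of the $k$ first-layer rows contributes $D$ to $\sum_{e}|w(e)|^p$ (since its entries are $\pm 1$ in $D$ coordinates, plus $O(1)$ from the bias), and the output layer contributes $O(k)$, so
$$
\mu^2_{p,p}(f)\;\leq\; (kD+O(k))^{1/p}\;=\;O\!\left((kD)^{1/p}\right).
$$

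To finish, I would invoke Theorem \ref{thm:hardness} twice. Under the strong random CSP assumption of \cite{Daniely14}, learning intersections of $k=\omega(1)$ halfspaces (e.g.\ $k=\log D$) is computationally hard, so learning $\calN^2_{p,p}$ in time polynomial in $\mu^2_{p,p}$ is ruled out already when $\mu^2_{p,p}=\omega(D^{1/p})$. Under the $\tilde Q(D^{1.5})$-unique-SVP assumption, learning intersections of $k=D^{\epsilon'}$ halfspaces is hard, which by the construction above rules out efficient learning when $\mu^2_{p,p}=D^{(1+\epsilon')/p}$; rescaling $\epsilon'$ yields the claimed $D^{1/p+\epsilon}$ for every $\epsilon>0$.

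The main obstacle will be engineering the AND-gadget so that the two-layer network actually attains unit margin on $\{\pm 1\}^D$ without the top-layer weights having to scale with $k$, since any growing top-layer weight would multiply into $\mu_{p,p}$ and spoil the $(kD)^{1/p}$ bound (the first hardness part, where $k=\omega(1)$ but only mildly, is particularly sensitive to constants). A secondary subtlety is making sure the reduction packaged in Theorem \ref{thm:hardness} is stated in terms of a generic complexity measure, so that substituting our $\mu^2_{p,p}\leq c(kD)^{1/p}$ bound yields exactly the two exponents in the statement rather than weaker ones.
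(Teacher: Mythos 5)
Your proposal is correct and follows essentially the same route as the paper: Theorem \ref{thm:hardness} is proved exactly by realizing intersections of $k$ halfspaces with $\pm 1$ normals as a two-layer RELU network with $\pm 1$ first-layer weights and $O(1)$ output weights (exploiting integrality on $\{\pm 1\}^D$ to get unit margin), which gives $\mu^2_{p,p}=O\bigl((kD)^{1/p}\bigr)$, and the corollary then follows by invoking the Daniely CSP assumption for $k=\omega(1)$ and the unique-SVP assumption for $k=D^{\epsilon'}$, just as you describe. The only difference is your ``sum-of-violations'' AND-gadget versus the paper's paired-unit construction $[\inner{w_i}{x}]_+-[\inner{w_i}{x}-1]_+$ that counts satisfied constraints; both rely on the same integrality argument and yield the same norm scaling, so the distinction is immaterial.
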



\section{Depth Independent Regularization}
\label{sec:nod}
Up until now we discussed relying on magnitude-based regularization
instead of directly controlling network size, thus allowing unbounded
and even infinite width. But we still relied on a finite bound on the
depth in all our derivations.  Can the explicit dependence on the
depth be avoided, and replaced with only a measure of scale of the
weights?

We already know we cannot rely only on a bound on the group-norm
$\mu_{p,q}$ when the depth is unbounded, as we know from Theorem
\ref{thm:shattering} that in terms of $\mu_{p,q}$ the sample
complexity necessarily increases exponentially with the depth: if we
allow arbitrarily deep graphs we can shrink $\mu_{p,q}$ toward zero
without changing the scale of the computed function.  However,
controlling the $\gamma$-measure, or equivalently the path-regularizer
$\pathr$, in arbitrarily-deep graphs is sensible, and we can define:
\begin{equation}
  \label{eq:infgamma}
  \gamma_{p,q} = \inf_{d\geq 1} \gamma^d_{p,q}(f) =
  \lim_{d\rightarrow\infty} \gamma^d_{p,q}(f) \quad\quad\text{or:}\quad
  \pathr_p = \inf_G \pathr^G_p(f)
\end{equation}
where the minimization is over {\em any} DAG.  From Theorem \ref{thm:path-dag}
we can conclude that $\pathr_p(f)=\gamma_{p,\infty}(f)$.  In any case,
$\gamma_{p,q}(f)$ is a sensible complexity measure, that does not
collapse despite the unbounded depth.  Can we obtain generalization
guarantees for the class $\calN_{\gamma_{p,q}\leq\gamma}$ ?

Unfortunately, even when $1/p+1/q \geq 1$ and we can obtain
width-independent bounds, the bound in Corollary \ref{cor:noH} still
has a dependence on $4^d$, even if $\gamma_{p,q}$ is bounded.  Can
such a dependence be avoided?

For {\em anti-symmetric} Lipschitz-continuous activation functions
(i.e.~such that $\sigma(-z)=-\sigma(z)$), such as the ramp, and for
per-unit $\ell_p$-regularization $\mu^d_{1,\infty}$ we can avoid the factor of $4^d$
\begin{theorem}\label{thm:antisym}
For any anti-symmetric 1-Lipschitz function $\sigma$ and any set $S=\{x_1,\dots,x_m\}\subseteq\R^D$:
$$
\calR_m(\calN_{\mu_{1,\infty}\leq \mu}^{d,\sigma}) \leq 
\sqrt{\frac{4\mu^{2d} \log(2D) \sup \norm{x_i}_{\infty}^2}{m}}
$$
\end{theorem}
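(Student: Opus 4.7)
The plan is to carry out a layer-by-layer peeling argument in which antisymmetry of $\sigma$ eliminates the factor of $2$ per layer that appears in the ReLU bound of Theorem~\ref{thm:l-norm}. Define $\calG_k$ to be the class of scalar functions computable by a depth-$k$ layered network with per-unit $\ell_1$ weight bound $\mu$ and activation $\sigma$ applied at every layer including the top; then $\calN^{d,\sigma}_{\mu_{1,\infty}\leq \mu}$ is obtained from $\calG_{d-1}$ by adding one linear output unit whose incoming weights have $\ell_1$-norm at most $\mu$. The key structural fact is that each $\calG_k$ is closed under negation: if $g(x)=\sigma(\langle w,h(x)\rangle)\in\calG_k$, then $-g(x)=\sigma(\langle -w,h(x)\rangle)\in\calG_k$, using $\sigma(-z)=-\sigma(z)$ and $\norm{-w}_1=\norm{w}_1$.

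First I would peel the output layer using $\ell_1/\ell_\infty$ duality:
$$
\calR_m(\calN^{d,\sigma}_{\mu_{1,\infty}\leq\mu})
=\frac{1}{m}\E_\epsilon\sup_{W}\sum_i\epsilon_i f_W(x_i)
=\frac{\mu}{m}\E_\epsilon\sup_{g_1,\dots,g_H\in\calG_{d-1}}\max_j\left|\sum_i\epsilon_i g_j(x_i)\right|.
$$
Closure of $\calG_{d-1}$ under negation lets me absorb the absolute value into the supremum, yielding $\calR_m(\calN^{d,\sigma})\leq\mu\,\calR_m(\calG_{d-1})$ with no factor of $2$.

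Next, to go from $\calG_k$ to $\calG_{k-1}$, I would apply the Ledoux--Talagrand contraction in its one-sided form (valid because $\sigma$ is $1$-Lipschitz with $\sigma(0)=0$; the standard induction-on-coordinates proof introduces no factor of $2$ when the absolute value is absent), then once again use $\ell_1/\ell_\infty$ duality together with closure of $\calG_{k-1}$ under negation. Each such round contributes exactly a factor of $\mu$: $\calR_m(\calG_k)\leq\mu\,\calR_m(\calG_{k-1})$. Iterating reduces the problem to $\calR_m(\calG_1)$, and one final contraction plus duality yields $\calR_m(\calG_1)\leq \mu\,\calR^{\text{linear}}_{m,1,D}$. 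A standard Massart / Khintchine--Kahane estimate bounds the $\ell_1$ linear Rademacher complexity by $\sqrt{2\log(2D)/m}\,\max_i\norm{x_i}_\infty$, so overall
$$
\calR_m(\calN^{d,\sigma}_{\mu_{1,\infty}\leq\mu})\;\leq\; \mu^d\sqrt{\frac{2\log(2D)}{m}}\,\max_i\norm{x_i}_\infty,
$$
which implies the claimed bound.

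The principal obstacle is justifying the removal of the absolute value, and hence of the factor of $2$, at every peeling step. The two-sided contraction $\E\sup\abs{\sum\epsilon_i\phi(t_i)}\leq 2\E\sup\abs{\sum\epsilon_i t_i}$ would only reproduce the $4^d$ dependence already present in Corollary~\ref{cor:noH}; what rescues us here is that antisymmetry of $\sigma$ forces every intermediate class $\calG_k$ to be closed under negation, which is exactly the symmetry needed to strip absolute values inside the supremum and to invoke the one-sided contraction inequality at each of the $d$ layers.
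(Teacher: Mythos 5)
Your proposal is correct and follows essentially the same route as the paper's proof: peel one layer at a time, use $\ell_1$/$\ell_\infty$ duality for the linear weights, exploit antisymmetry of $\sigma$ (which makes each intermediate class closed under negation) to drop absolute values and invoke the one-sided contraction lemma so no factor of $2$ accrues per layer, and finish with the Massart-type bound for $\ell_1$-constrained linear predictors. The only differences are cosmetic bookkeeping (your classes $\calG_k$ carry the activation on top, while the paper recurses on $\calN^{d-1,H}_{\mu_{1,\infty}\leq\mu}$ and applies $\sigma$ explicitly), and your slightly sharper constant still implies the stated bound.
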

The proof is again based on an inductive argument similar to Theorem \ref{thm:l-norm} and you can find it in appendix \ref{sec:antisym}.

However, the ramp is not homogeneous and so the equivalent between
$\mu$, $\gamma$ and $\phi$ breaks down.  Can we obtain such a bound
also for the RELU?  At the very least, what we can say is that an
inductive argument such that used in the proofs of Theorems
\ref{thm:l-norm} and \ref{thm:antisym} cannot be used to avoid an
exponential dependence on the depth.  To see this, consider
$\gamma_{1,\infty}\leq 1$ (this choice is arbitrary if we are
considering the Rademacher complexity), for which we have
\begin{equation}\label{eq:Nd1}
\calN^{d+1}_{\gamma_{1,\infty}<1} = \left[
  \overline{\conv}(\calN^d_{\gamma_{1,\infty}<1}) \right]_+,
\end{equation}
where $\overline\conv(\cdot)$ is the symmetric convex hull, and
$[\cdot]_+ = \max(z,0)$ is applied to each function in the class.  In
order to apply the inductive argument without increasing the
complexity exponentially with the depth, we would need the operation
$[ \overline{\conv}(\HH) ]_+$ to preserve the Rademacher complexity,
at least for non-negative convex cones $\HH$.  However we show a
simple example of a non-negative convex cone $\HH$ for which
$\calR_m\left( [ \overline{\conv}(\HH) ]_+ \right) > \calR_m\left(
  \HH \right)$.

We will specify $\HH$ as a set of vectors in $\R^m$, corresponding
to the evaluation of $h(x_i)$ of different functions in the class on
the $m$ points $x_i$ in the sample.  In our construction, we will have
only $m=3$ points.  Consider $\HH = \conv(\{ (1,0,1),(0,1,1)
\})$, in which case $\HH' \defeq [ \overline{\conv}(\HH) ]_+ =
\conv(\{ (1,0,1),(0,1,1),(0.5,0,0) \})$.  It is not hard to verify
that $\calR_m(\HH')=\frac{13}{16}>\frac{12}{16}=\calR_m(\HH)$.

\section{Summary and Open Issues}

We presented a general framework for norm-based capacity control for
feed-forward networks, and analyzed when the norm-based control is
sufficient and to what extent capacity still depends on other
parameters.  In particular, we showed that in depth $d>2$ networks,
per-unit control with $p>1$ and overall regularization with $p>2$ is
not sufficient for capacity control without also controlling the
network size.  This is in contrast with linear models, where with any
$p<\infty$ we have only a weak dependence on dimensionality, and
two-layer networks where per-unit $p=2$ is also sufficient for
capacity control.  We also obtained generalization guarantees for
perhaps the most natural form of regularization, namely $\ell_2$
regularization, and showed that even with such control we still
necessarily have an exponential dependence on the depth.  

Although the additive $\mu$-measure and multiplication
$\gamma$-measure are equivalent at the optimum, they behave rather
differently in terms of optimization dynamics (based on anecdotal
empirical experience) and understanding the relationship between them,
as well as the novel path-based regularizer can be helpful in practical
regularization of neural networks.  

Although we obtained a tight characterization of when size-independent
capacity control is possible, the precise polynomial dependence of
margin-based classification (and other tasks) on the norm in might not
be tight and can likely be improved, though this would require going
beyond bounding the Rademacher complexity of the real-valued class.
In particular, Theorem \ref{thm:l-norm} gives the same bound for
per-unit $\ell_1$ regularization and overall $\ell_1$ regularization,
although we would expect the later to have lower capacity.

Beyond the open issue regarding depth-independent $\gamma$-based
capacity control, another interesting open question is understanding
the expressive power of $\calN^d_{\gamma_{p,q}\leq\gamma}$,
particularly as a function of the depth $d$.  Clearly going from depth
$d=1$ to depth $d=2$ provides additional expressive power, but it is
not clear how much additional depth helps.  The class $\calN^2$
already includes all binary functions over $\{\pm 1\}^D$ and is dense
among continuous real-valued functions.  But can the $\gamma$-measure
be reduced by increasing the depth?  Viewed differently:
$\gamma^d_{p,q}(f)$ is monotonically non-increasing in $d$, but are
there functions for it continues decreasing?  Although it seems
obvious there are functions that require high depth for efficient
representation, these questions are related to decade-old problems in
circuit complexity and might not be easy to resolve.

\acks{This research was partially supported by NSF grant IIS-1302662 and an Intel ICRI-CI award.
We thank the COLT anonymous reviewers for pointing out an error in the
statement of Lemma~\ref{lem:layer1} and suggesting other corrections.}

\bibliography{biblio}

\appendix

\section{Rademacher Complexities}\label{sec:rademacher}
The sample based Rademacher complexity of a class $\calF$ of function mapping from $\calX$ to $\R$ with respect to a set $S=\{x_1,\dots,x_m\}$ is defined as:
$$
\calR_m(\calF) = \E_{\xi \in \{\pm 1\}^m}\left[\frac{1}{m}
\sup_{f\in \calF}  \left\lvert \sum_{i=1}^m \xi_i f(x_i) \right\rvert \right]
$$

In this section, we prove an upper bound for the Rademacher complexity
of the class $\calN_{\gamma_{p,q}\leq
\gamma}^{d,H,\sigma_{\text{RELU}}}$, i.e., the class of functions that
can be represented as depth $d$, width $H$ network with rectified linear
activations, and
the layer-wise group norm complexity $\gamma_{p,q}$ bounded by
$\gamma$. As mentioned in the main text, our proof is an induction with
respect to the depth $d$.
We start with $d=1$ layer neural networks, which is essentially  the class of linear separators.
\subsection{$\ell_p$-regularized Linear Predictors}\label{sec:linear}

For completeness, we prove the upper bounds on the Rademacher complexity of class of linear separators with bounded $\ell_p$ norm. The upper bounds presented here are particularly similar to generalization bounds in \cite{kakade09} and \cite{balcan14}. We first mention two already established lemmas that we use in the proofs.
\begin{theorem}(Khintchine-Kahane Inequality)
For any $0<p<\infty$ and $S=\{z_1,\dots,z_m\}$, if the random 
variable $\xi$ is uniform over $\{\pm 1\}^m$, then
$$
\left(\E_{\xi}\left[\left\lvert \sum_{i=1}^m \xi_i z_i
\right\rvert ^p\right]\right)^\frac{1}{p} \leq C_p 
\left(\sum_{i=1}^m |z_i|^2\right)^\frac{1}{2}
$$
where $C_p$ is a constant depending only on $p$.
\end{theorem}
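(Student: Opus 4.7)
The plan is to split the argument into the two regimes $p\leq 2$ and $p>2$, since the easy direction is settled by Jensen and the work lies entirely in the large-$p$ case.

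For $0<p\leq 2$, the claim reduces to monotonicity of $L^p$ norms on a probability space: treating the random variable $Z=\sum_i \xi_i z_i$ under the uniform measure on $\{\pm 1\}^m$, Jensen's inequality gives $(\E|Z|^p)^{1/p}\leq (\E|Z|^2)^{1/2}$, and a direct expansion using $\E[\xi_i\xi_j]=\one{i=j}$ shows $\E|Z|^2=\sum_i z_i^2$. So for this regime we may take $C_p=1$.

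For $p>2$, the plan is to first prove the bound for even integers $p=2k$ and then interpolate. Expand $Z^{2k}=\sum_{i_1,\dots,i_{2k}}\xi_{i_1}\cdots\xi_{i_{2k}}\,z_{i_1}\cdots z_{i_{2k}}$; since $\E[\xi_{i_1}\cdots\xi_{i_{2k}}]$ vanishes unless every index appears an even number of times, only monomials indexed by partitions of $\{1,\dots,2k\}$ into even-sized blocks survive, and the standard combinatorial count yields $\E Z^{2k}\leq \frac{(2k)!}{2^k\, k!}\bigl(\sum_i z_i^2\bigr)^{k}$. This gives $C_{2k}\leq\bigl((2k)!/(2^k k!)\bigr)^{1/(2k)}=O(\sqrt{k})$. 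For general $p\in(2,\infty)$, Lyapunov's inequality (log-convexity of $r\mapsto \log\|Z\|_r$) lets us bound $\|Z\|_p\leq\|Z\|_{2k}$ for any even $2k\geq p$, so choosing $2k=\lceil p\rceil$ (rounded up to an even integer) yields $C_p=O(\sqrt{p})$.

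A cleaner alternative route, which I would actually prefer for brevity, is to invoke Hoeffding's inequality directly. Since $Z=\sum_i \xi_i z_i$ is a sum of independent bounded symmetric variables, $\P(|Z|>t)\leq 2\exp(-t^2/(2\sum_i z_i^2))$, and then the layer-cake identity
\begin{equation*}
\E|Z|^p=\int_0^\infty p\,t^{p-1}\,\P(|Z|>t)\,dt
\end{equation*}
turns the tail bound into $\E|Z|^p\leq C_p^{p}\bigl(\sum_i z_i^2\bigr)^{p/2}$ after a change of variables $u=t/\sqrt{\sum_i z_i^2}$ and recognizing the resulting integral as a Gamma function, giving $C_p=O(\sqrt{p})$. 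The main obstacle on the moment path is the combinatorial bookkeeping of even-block partitions; the tail-integration path sidesteps this but shifts the work onto an already-standard Hoeffding argument. Either way, the qualitative content is the single fact that Rademacher sums are sub-Gaussian with proxy $\sum_i z_i^2$, and every $L^p$ norm of a sub-Gaussian variable is controlled by its $L^2$ norm up to a dimension-free constant depending only on $p$.
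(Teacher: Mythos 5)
Your proof is correct, but note that the paper does not prove this statement at all: it is quoted as a classical, ``already established'' result, with \cite{haagerup81} cited for the sharp constant and only the fact $C_p\leq\sqrt{p}$ for $p\geq 1$ extracted for later use. So your contribution is a self-contained derivation where the paper relies on a citation. Both of your routes are sound: the $p\leq 2$ case is indeed just Lyapunov/Jensen with $C_p=1$; the even-moment expansion correctly exploits that $\E[\xi_{i_1}\cdots\xi_{i_{2k}}]\in\{0,1\}$ and is dominated term-by-term by the Gaussian moment count $\frac{(2k)!}{2^k k!}\bigl(\sum_i z_i^2\bigr)^k$, and rounding $p$ up to an even integer plus monotonicity of $L^r$ norms gives $C_p=O(\sqrt{p})$; the Hoeffding-plus-layer-cake route gives the same order via a Gamma integral. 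The one caveat worth flagging is quantitative: the paper's Lemma \ref{lem:layer1} uses the bound $C_{p^*}\leq\sqrt{p^*}$ literally (it produces the factor $\gamma\frac{\sqrt{p^*}}{m}\norm{X}_{2,p^*}$), and neither of your routes delivers that exact constant --- the rounding-to-even-integer step loses a little (constant of order $\sqrt{p+2}$), and the Hoeffding route is worse still near $p=2$ (it gives $C_2=2$ rather than $1$). This does not affect the qualitative statement or any rate in the paper, but if you want the clean $C_p\leq\sqrt{p}$ you either compare directly with Gaussian moments $\norm{Z}_p\leq\norm{g}_p\bigl(\sum_i z_i^2\bigr)^{1/2}$ for $p\geq 2$ or cite Haagerup as the paper does; what your argument buys in exchange is a short, elementary, fully self-contained proof of the sub-Gaussian fact the paper takes as given.
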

The sharp value of the constant $C_p$ was found by 
\citet{haagerup81} but for our analysis, it is enough to note
 that if $p\geq 1$ we have $C_p \leq \sqrt{p}$.
\begin{lemma}(Massart Lemma)
Let $A$ be a finite set of $m$ dimensional vectors. Then
$$
\E_{\xi}\left[
\max_{a\in A}\frac{1}{m}\sum_{i=1}^{m}\xi_ia_i
\right] \leq \max_{a\in A} \norm{a}_2 \frac{\sqrt{2\log |A|}}{m},
$$
where $|A|$ is the cardinality of $A$.
\end{lemma}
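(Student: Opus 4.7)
The plan is to use the classical Chernoff-style moment generating function argument, which is the standard way to bound the expected maximum of a finite collection of sub-Gaussian random variables. The key observation is that for each fixed $a\in A$, the random variable $\sum_{i=1}^m \xi_i a_i$ is a sum of bounded independent symmetric random variables, hence sub-Gaussian with parameter $\norm{a}_2^2$, and there are only $|A|$ of these in the maximum.

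First, I would introduce $Z=\max_{a\in A}\sum_{i=1}^m \xi_i a_i$ and, for any $\lambda>0$, apply Jensen's inequality to the convex function $\exp(\lambda \cdot)$ to obtain $\exp(\lambda \E[Z]) \leq \E[\exp(\lambda Z)]$. Next, I would bound the right-hand side by the sum of moment generating functions, $\E[\exp(\lambda Z)] \leq \sum_{a\in A} \E\bigl[\exp\bigl(\lambda \sum_i \xi_i a_i\bigr)\bigr]$, since the maximum of nonnegative exponentials is at most their sum.

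The crucial sub-Gaussian estimate comes next: by independence of the Rademacher variables and the elementary inequality $\cosh(t)\leq \exp(t^2/2)$, we get
\[
\E\Bigl[\exp\bigl(\lambda \sum_{i=1}^m \xi_i a_i\bigr)\Bigr]
= \prod_{i=1}^m \cosh(\lambda a_i)
\leq \exp\bigl(\lambda^2 \norm{a}_2^2 / 2\bigr).
\]
Combining with the previous step gives $\exp(\lambda \E[Z]) \leq |A|\cdot \exp(\lambda^2 R^2/2)$ where $R=\max_{a\in A}\norm{a}_2$. Taking logarithms yields $\E[Z] \leq \log|A|/\lambda + \lambda R^2/2$.

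Finally, I would optimize over $\lambda>0$, with the minimizer $\lambda^*=\sqrt{2\log|A|}/R$, giving $\E[Z]\leq R\sqrt{2\log|A|}$, and then divide both sides by $m$ to recover the stated bound. There is no real obstacle here — the only subtle step is the sub-Gaussian moment bound $\cosh(t)\leq e^{t^2/2}$, which follows by comparing Taylor series term by term; everything else is Jensen's inequality, a union bound inside the exponent, and a one-variable optimization.
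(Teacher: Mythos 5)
Your proof is correct: the Jensen step, the bound of the maximum by the sum of moment generating functions, the Rademacher sub-Gaussian estimate $\E[\exp(\lambda \xi_i a_i)]=\cosh(\lambda a_i)\leq \exp(\lambda^2 a_i^2/2)$, and the optimization $\lambda^*=\sqrt{2\log|A|}/R$ together give exactly $\E[Z]\leq R\sqrt{2\log|A|}$, which after dividing by $m$ is the stated bound. The paper itself states Massart's lemma as an already established result and gives no proof, so there is nothing to compare against; your argument is the standard one and would serve as a self-contained justification.
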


We are now ready to show upper bounds on Rademacher complexity of linear separators with bounded $\ell_p$ norm.

\begin{lemma}\label{lem:layer1}(Rademacher complexity of linear separators with bounded $\ell_p$ norm)
For any $d,q\geq 1$,
For any $1\leq p\leq 2$,
$$
\calR_m
(\calN^1_{\gamma_{p,q}\leq \gamma}) \leq \sqrt{ \frac{ \gamma^2\min\{p^*,4\log(2D)\} \max_i \norm{x_i}_{p^*}^2}{m}}
$$
and for any $2<p<\infty$
$$
\calR_m
(\calN^1_{\gamma_{p,q}\leq \gamma})\leq \frac{\sqrt{2}\gamma\norm{X}_{2,p^*}}{m} \leq \frac{\sqrt{2}\gamma\max_i\norm{x_i}_{p^*}}{m^{\frac{1}{p}}}
$$
where $p^*$ is such that $\frac{1}{p^*} + \frac{1}{p}=1$.
\end{lemma}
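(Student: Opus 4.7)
The plan is to reduce the problem to the Rademacher complexity of a linear class and then apply moment/concentration bounds to the resulting Rademacher sum. For depth one, the group norm $\gamma_{p,q}$ collapses to the ordinary $\ell_p$ norm $\norm{w}_p$ regardless of $q$ (there is only one output unit, so the outer $\ell_q$ acts on a single scalar), so $\calN^1_{\gamma_{p,q}\le\gamma}$ is exactly the linear class $\{\,x\mapsto\langle w,x\rangle : \norm{w}_p\le\gamma\,\}$. H\"older's inequality together with symmetry of the $\ell_p$ ball gives
\begin{equation*}
\calR_m(\calN^1_{\gamma_{p,q}\le\gamma})
\;=\; \frac{\gamma}{m}\,\E_\xi\norm{\sum_{i=1}^m \xi_i x_i}_{p^*},
\end{equation*}
and the whole task reduces to bounding this expected dual norm.

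For $1\le p\le 2$ (so $p^*\ge 2$) I would derive two complementary bounds and take their minimum. The first uses a Jensen-then-Khintchine--Kahane chain: push the expectation inside the outer $p^*$-power, apply Khintchine coordinate-wise with $C_{p^*}\le\sqrt{p^*}$, and invoke Minkowski's integral inequality (valid because $p^*\ge 2$) to swap the mixed norm, giving $\E\norm{\sum_i\xi_i x_i}_{p^*}\le\sqrt{p^*}\,\norm{X}_{2,p^*}\le\sqrt{p^*m}\,\max_i\norm{x_i}_{p^*}$. The second uses the coarse embedding $\norm{v}_{p^*}\le D^{1/p^*}\norm{v}_\infty$ together with Massart's lemma applied to the $2D$ signed coordinate functionals, yielding $\E\norm{\sum_i\xi_i x_i}_\infty\le\max_i\norm{x_i}_\infty\sqrt{2m\log(2D)}$; observing that $D^{1/p^*}=O(1)$ whenever $p^*\gtrsim\log(2D)$ then absorbs the embedding factor into the constant and produces the $\sqrt{4\log(2D)}$ factor. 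Taking the better of the two recovers the $\min\{p^*,4\log(2D)\}$ in the statement.

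For $2<p<\infty$ (so $1<p^*<2$) the same Jensen/Khintchine step applies with the uniform constant $C_{p^*}\le\sqrt{2}$, directly yielding the first form $\tfrac{\sqrt{2}\gamma\,\norm{X}_{2,p^*}}{m}$. The second inequality follows from the concavity of $t\mapsto t^{p^*/2}$ for $p^*\le 2$: the subadditivity $(\sum_i a_i)^{p^*/2}\le\sum_i a_i^{p^*/2}$ gives
\begin{equation*}
\norm{X}_{2,p^*}^{p^*}\;=\;\sum_j\Bigl(\sum_i x_{i,j}^2\Bigr)^{p^*/2}
\;\le\;\sum_i\norm{x_i}_{p^*}^{p^*}\;\le\; m\max_i\norm{x_i}_{p^*}^{p^*},
\end{equation*}
so $\norm{X}_{2,p^*}\le m^{1/p^*}\max_i\norm{x_i}_{p^*}$, which is precisely the $m^{-1/p}$ rate. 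The main obstacle is stitching together the two bounds in the $p^*\ge 2$ regime: Khintchine degrades as $p\to 1$, so the Massart route via coordinate functionals is unavoidable in that corner, and it requires checking that the $D^{1/p^*}$ embedding factor is harmless in exactly the regime where Massart beats Khintchine. A smaller subtlety is that Minkowski's integral inequality is used in opposite directions in the two cases (supplying $\sqrt{m}$ when $p^*\ge 2$ and $m^{1/p^*}$ when $p^*\le 2$), so each case needs its own calculation despite the shared recipe.
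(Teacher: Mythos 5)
Your proposal is correct and follows essentially the same route as the paper's proof: reduce $\calN^1_{\gamma_{p,q}\le\gamma}$ to the $\ell_p$-ball linear class and bound $\E_\xi\norm{\sum_i\xi_i x_i}_{p^*}$ by the $\ell_\infty$-embedding plus Massart when $p^*\gtrsim\log(2D)$ (where the $D^{1/p^*}$ factor is $O(1)$) and by Jensen/Khintchine--Kahane with $\norm{X}_{2,p^*}$ otherwise, using Minkowski for $p^*\ge 2$ and subadditivity of $t^{p^*/2}$ for $p^*\le 2$. The only cosmetic difference is that you take a minimum of the two bounds while the paper splits the range of $p$ explicitly at $p^*\approx 2\log(2D)$, which is the same argument.
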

\begin{proof}
First, note that $\calN^1$ is the class of linear functions
and hence for any function $f_{w}\in\calN^1$, we have
that $\gamma_{p,q}(w)=\norm{w}_p$. Therefore,
we can write the Rademacher complexity for a set 
$S=\{x_1,\dots,x_m\}$ as:
\begin{align*}
\calR_m(\calN^1_{\gamma_{p,q}\leq \gamma})
&= \E_{\xi \in \{\pm 1\}^m}\left[\frac{1}{m}\sup_{\norm{w}_p\leq \gamma}
  \left\lvert \sum_{i=1}^m \xi_i w^\top x_i \right\rvert \right]\\
&= \E_{\xi \in \{\pm 1\}^m}\left[\frac{1}{m}\sup_{\norm{w}_p\leq \gamma}
  \left\lvert w^\top\sum_{i=1}^m \xi_i x_i \right\rvert \right]\\
&= \gamma\E_{\xi \in \{\pm 1\}^m}\left[\frac{1}{m}
\norm{ \sum_{i=1}^m \xi_i x_i}_{p^*}\right]
\end{align*}
For $1\leq p\leq \min\left\{2,\frac{2\log(2D)}{2\log(2D)-1}\right\}$ (and therefore $2\log(2D) \leq p^*$), we have
\begin{align*}
\calR_m(\calN^1_{\gamma_{p,q}\leq \gamma}) &=\gamma\E_{\xi \in \{\pm 1\}^m}
\left[\frac{1}{m}\norm{ \sum_{i=1}^m \xi_i x_i}_{p^*}\right]\\
&\leq D^{\frac{1}{p^*}}\gamma\E_{\xi \in \{\pm 1\}^m}
\left[\frac{1}{m}\norm{ \sum_{i=1}^m \xi_i x_i}_{\infty}\right]\\
&\leq D^{\frac{1}{2\log(2D)}}\gamma\E_{\xi \in \{\pm 1\}^m}
\left[\frac{1}{m}\norm{ \sum_{i=1}^m \xi_i x_i}_{\infty}\right]\\
&\leq \sqrt{2}\gamma\E_{\xi \in \{\pm 1\}^m}
\left[\frac{1}{m}\norm{ \sum_{i=1}^m \xi_i x_i}_{\infty}\right]\\
\end{align*}
We now use  the Massart Lemma viewing each feature $(x_i[j])_{i=1}^{m}$ for
 $j=1,\ldots,D$ as a  member of a finite hypothesis class and obtain
 \begin{align*}
\calR_m(\calN^1_{\gamma_{p,q}\leq \gamma}) &\leq  \sqrt{2}\gamma\E_{\xi \in \{\pm 1\}^m}
\left[\frac{1}{m}\norm{ \sum_{i=1}^m \xi_i x_i}_{\infty}\right]\\
&\leq 2\gamma\frac{\sqrt{\log(2D)}}{m}\max_{j=1\ldots,D}\norm{(x_i[j])_{i=1}^{m}}_2\\
&\leq 2\gamma \sqrt{\frac{\log(2D)}{m}}\max_{i=1,\ldots,m}\norm{x_i}_{\infty}\\
&\leq 2\gamma \sqrt{\frac{\log(2D)}{m}}\max_{i=1,\ldots,m}\norm{x_i}_{p^*}
\end{align*}
If $\min\left\{2,\frac{2\log(2D)}{2\log(2D)-1}\right\} <p<\infty$, by Khintchine-Kahane inequality we have
\begin{align*}
\calR_m(\calN^1_{\gamma_{p,q}\leq \gamma}) &=\gamma\E_{\xi \in \{\pm 1\}^m}
\left[\frac{1}{m}\norm{ \sum_{i=1}^m \xi_i x_i}_{p^*}\right]\\
&\leq \gamma\frac{1}{m}\left(\sum_{j=1}^D \E_{\xi \in \{\pm 1\}^m}\left[
\left\lvert \sum_{i=1}^m \xi_i x_i[j]\right\rvert ^{p^*}\right]\right)^{1/p^*}\\
&\leq\gamma\frac{\sqrt{p^*}}{m}\left(\sum\nolimits_{j=1}^{D}\norm{(x_i[j])_{i=1}^{m}}_2^{p^*}\right)^{1/p^*} =\gamma\frac{\sqrt{p^*}}{m}\norm{X}_{2,p^*}
\end{align*}
If $p^*\geq 2$, by Minskowski inequality we have that $\norm{X}_{2,p^*} \leq m^{1/2} \max_i \norm{x_i}_{p^*}$. Otherwise, by subadditivity of the function $f(z)=z^{\frac{p^*}{2}}$, we get $\norm{X}_{2,p^*} \leq m^{1/{p^*}} \max_i \norm{x_i}_{p^*}$.

\end{proof}

\subsection{Theorem~\ref{thm:l-norm}}
We define the hypothesis class $\calN^{d,H,H}$ to be the class of functions from $\calX$ to $\R^H$ computed by a layered network of depth $d$, layer size $H$ and $H$ outputs.

For the proof of theorem~\ref{thm:l-norm}, we need the following two
technical lemmas.
The first is the well-known contraction lemma:
\begin{lemma}(Contraction Lemma)
Let function $\phi:\R\rightarrow \R$ be Lipschitz with constant $\calL_\phi$ such that $\phi$ satisfies $\phi(0)=0$. Then for any class $\calF$ of functions mapping from $\calX$ to $\R$ and any set $S=\{x_1,\dots,x_m\}$:
$$
\E_{\xi \in \{\pm 1\}^m}\left[\frac{1}{m}
\sup_{f\in \calF}  \left\lvert \sum_{i=1}^m \xi_i \phi( f(x_i) ) \right\rvert \right] \leq 2 \calL_\phi \E_{\xi \in \{\pm 1\}^m}\left[\frac{1}{m}
\sup_{f\in \calF}  \left\lvert \sum_{i=1}^m \xi_i f(x_i) ) \right\rvert \right]
$$
\end{lemma}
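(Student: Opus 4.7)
The plan is to reduce this absolute-value contraction inequality to the classical (signed) Ledoux--Talagrand contraction, with the factor of $2$ emerging from how the absolute value is handled. Set $A_f \defeq \sum_{i=1}^m \xi_i \phi(f(x_i))$ for brevity.

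First, I would make a without-loss-of-generality reduction using the hypothesis $\phi(0)=0$: we may append the identically-zero function to $\calF$, since doing so contributes the value $A_0 = \sum_i \xi_i \phi(0) = 0$ on the left and $\sum_i \xi_i \cdot 0 = 0$ on the right, neither of which can increase a supremum of nonnegative quantities. With $0 \in \calF$, the quantities $\sup_f A_f$ and $\sup_f(-A_f)$ are pointwise nonnegative for every realization of $\xi$.

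Second, I would remove the absolute value on the left. The identity $\sup_f |A_f| = \max(\sup_f A_f,\; \sup_f (-A_f))$ combined with $\max(a,b) \le a+b$ (legitimate here because both terms are nonnegative) and the Rademacher symmetry $\xi \stackrel{d}{=} -\xi$ gives
\[
\E_\xi \sup_f |A_f| \;\le\; 2\, \E_\xi \sup_f A_f.
\]

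Third, I would apply the classical Ledoux--Talagrand contraction to strip off $\phi$. Peel one Rademacher variable at a time: condition on $\xi_{-i}$ and let $S_f$ collect the contribution of the other indices. The expectation over $\xi_i\in\{\pm1\}$ equals $\tfrac{1}{2}\sup_{f,g}\bigl[S_f + S_g + \phi(f(x_i)) - \phi(g(x_i))\bigr]$. The Lipschitz property bounds $\phi(f(x_i)) - \phi(g(x_i)) \le \calL_\phi |f(x_i) - g(x_i)|$, and swapping $f\leftrightarrow g$ inside the double supremum converts the absolute value into a signed term $\calL_\phi(f(x_i) - g(x_i))$, which re-aggregates into $\E_{\xi_i}\sup_f[S_f + \xi_i \calL_\phi f(x_i)]$. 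Iterating over $i=1,\dots,m$ replaces each $\phi$ by $\calL_\phi$ times the identity; a final application of $\sum_i \xi_i f(x_i) \le |\sum_i \xi_i f(x_i)|$ restores the absolute value on the right, yielding $2\calL_\phi\, \E_\xi \sup_f |\sum_i \xi_i f(x_i)|$.

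The principal subtlety lies in Step~3: one must verify carefully that swapping $f \leftrightarrow g$ inside the double supremum really does convert the Lipschitz absolute value into a signed difference compatible with the Rademacher structure, so that the inductive step closes exactly. Step~1 is essential too---without $\phi(0)=0$, the nonnegativity underpinning $\max(a,b)\le a+b$ in Step~2 would fail, and the factor-of-$2$ derivation would collapse.
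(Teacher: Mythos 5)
Your proof is correct. Note that the paper itself offers no proof of this statement: it is invoked as ``the well-known contraction lemma'' (this is essentially Ledoux--Talagrand's Theorem 4.12, which is exactly the absolute-value version with the factor $2$ and the hypothesis $\phi(0)=0$), so there is no in-paper argument to compare against; what you have written is a correct self-contained derivation along the standard textbook lines. Your decomposition is the right one: adjoining the zero function (legitimate precisely because $\phi(0)=0$, and harmless to both sides since each supremum of nonnegative quantities is already nonnegative) makes $\sup_f A_f$ and $\sup_f(-A_f)$ pointwise nonnegative, so $\sup_f|A_f|=\max\bigl(\sup_f A_f,\sup_f(-A_f)\bigr)\leq \sup_f A_f+\sup_f(-A_f)$ together with the symmetry $\xi\overset{d}{=}-\xi$ yields the factor $2$; the signed comparison $\E_\xi\sup_f\sum_i\xi_i\phi(f(x_i))\leq \calL_\phi\,\E_\xi\sup_f\sum_i\xi_i f(x_i)$ then follows by the coordinate-peeling argument you sketch, where the key identity $\E_{\xi_i}\sup_f[S_f+\xi_i\phi(f(x_i))]=\tfrac12\sup_{f,g}[S_f+S_g+\phi(f(x_i))-\phi(g(x_i))]$ and the symmetry of the double supremum under swapping $f\leftrightarrow g$ do indeed turn $\calL_\phi\lvert f(x_i)-g(x_i)\rvert$ into the signed term and close the induction (the fact that $S_f$ mixes already-contracted terms $\calL_\phi f(x_j)$ with not-yet-contracted terms $\phi(f(x_j))$ is immaterial, since the peeling step only requires $S_f$ to be an arbitrary function of $f$ given $\xi_{-i}$); dropping back to the absolute value on the right is a pointwise inequality. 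The only points you gloss---measurability/attainment of suprema and the conditional-expectation bookkeeping in the iteration---are standard and do not affect correctness.
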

Next, the following lemma reduces the maximization over a matrix
$W\in\R^{H\times H}$ that
appears in the computation of Rademacher complexity to $H$ independent
maximizations over a vector $w\in\R^{H}$ (the proof is deferred to
Subsection \ref{sec:proof-lem-l-norm}):
\begin{lemma}\label{lem:l-norm}
For any $p,q\geq 1$, $d\geq 2$, $\xi \in \{\pm 1\}^m$ and
 $f\in \calN^{d,H,H}$ we have
$$
\sup_{W}\frac{1}{\norm{W}_{p,q}}
\norm{\sum_{i=1}^m \xi_i [W[f(x_i)]_+]_+}_{p^*}=H^{[\frac{1}{p^*} - \frac{1}{q}]_+}
 \sup_{w} \frac{1}{\norm{w}_{p}} \left\lvert \sum_{i=1}^m \xi_i 
 [w^\top [f(x_i)]_+]_+\right\rvert 
$$
where $p^*$ is such that $\frac{1}{p^*} + \frac{1}{p}=1$.
\end{lemma}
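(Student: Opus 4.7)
The plan is to reparametrize the rows of $W$ and reduce the matrix supremum to a single-row (scalar) supremum using the non-negative homogeneity of the RELU. Write row $j$ as $w_j = a_j u_j$ with $a_j = \norm{W[j,:]}_p \geq 0$ and $\norm{u_j}_p = 1$ (the unit direction $u_j$ is immaterial when $a_j=0$). Let $z_i = [f(x_i)]_+$ and define the scalar map $g(u) := \sum_{i=1}^m \xi_i [u^\top z_i]_+$. Because $[\cdot]_+$ is positively homogeneous and $a_j \geq 0$, the $j$-th coordinate of $\sum_i \xi_i [Wz_i]_+$ equals $a_j\,g(u_j)$, and $\norm{W}_{p,q}^q = \sum_j a_j^q$. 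Substituting,
$$
\frac{\vnorm{\sum_{i=1}^m \xi_i [Wz_i]_+}_{p^*}}{\norm{W}_{p,q}}
\;=\; \frac{\bigl(\sum_j a_j^{p^*}\abs{g(u_j)}^{p^*}\bigr)^{1/p^*}}{\bigl(\sum_j a_j^q\bigr)^{1/q}}.
$$
By the same positive-homogeneity argument, the scalar supremum on the right side of the lemma equals $B := \sup_{\norm{u}_p=1} \abs{g(u)}$, so what remains is to prove the matrix ratio equals $H^{[1/p^*-1/q]_+}\cdot B$.

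For the $\leq$ direction I would bound $\abs{g(u_j)} \leq B$ uniformly, which reduces the problem to the purely scalar identity
$$
\sup_{0 \neq a \in \RR_+^H} \frac{\norm{a}_{p^*}}{\norm{a}_q} \;=\; H^{[1/p^*-1/q]_+}.
$$
This splits into two easy cases: when $q \leq p^*$, monotonicity of $\ell_r$-norms gives $\norm{a}_{p^*} \leq \norm{a}_q$ with equality on standard basis vectors; when $q > p^*$, a one-line power-mean (or Hölder) argument gives $\norm{a}_{p^*} \leq H^{1/p^*-1/q}\norm{a}_q$ with equality on the all-ones vector.

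For the matching $\geq$ direction I would pick all $u_j$ equal to a common unit vector approaching the supremum defining $B$, so every $\abs{g(u_j)}$ is essentially $B$, and then choose $a$ to attain equality in the scalar inequality above (a coordinate vector when $q \leq p^*$, or the all-ones vector when $q > p^*$). Combining the two directions yields the claimed equality. The main subtlety, and the reason the factor $H^{[1/p^*-1/q]_+}$ appears at all, is keeping track of which of the two regimes is in force: when $q \leq p^*$, no spreading across rows helps and the matrix supremum collapses to the scalar one, while when $q > p^*$ uniform spreading across the $H$ rows is strictly better than concentrating on a single row, and is exactly what produces the extra $H^{1/p^*-1/q}$ factor.
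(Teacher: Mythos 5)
Your proof is correct and takes essentially the same route as the paper's: both reduce the matrix supremum to the scalar one by bounding each row's contribution by $\sup_{\norm{u}_p=1}\abs{g(u)}$ and then comparing the $\ell_{p^*}$ and $\ell_q$ norms of the vector of row norms, which is exactly where the factor $H^{[\frac{1}{p^*}-\frac{1}{q}]_+}$ comes from. Your explicit extremal choices for the lower bound (a single nonzero row when $q\leq p^*$, all rows equal when $q> p^*$) handle the tightness in both regimes a bit more carefully than the paper's brief remark, but the underlying argument is the same.
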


\begin{reptheorem}{thm:l-norm}
For any $d,p,q\geq 1$ and any set $S=\{x_1,\dots,x_m\}\subseteq\R^D$:
$$
\calR_m(\calN_{\gamma_{p,q}\leq \gamma}^{d,H,\relu}) \leq 
\sqrt{\frac{\gamma^2 \left( 2 H^{[\frac{1}{p^*} -
        \frac{1}{q}]_+}\right)^{2(d-1)} \min\{p^*,2\log(2D)\} \sup \norm{x_i}_{p^*}^2}{m}}
$$
and so:
$$
\calR_m(\calN_{\mu_{p,q}\leq \mu}^{d,H,\relu}) \leq 
\sqrt{\frac{\mu^{2d} \left( 2 H^{[\frac{1}{p^*} -
        \frac{1}{q}]_+} /\sqrt[q]{d}\right)^{2(d-1)} \min\{p^*,2\log(2D)\} \sup \norm{x_i}_{p^*}^2}{m}}
$$
where $p^*$ is such that $\frac{1}{p^*} + \frac{1}{p}=1$.
\end{reptheorem}
\begin{proof}
By the definition of Rademacher complexity if $\xi$ is uniform over $\{\pm 1\}^m$, we have:
\begin{align}\notag
\calR_m(\calN^{d,H}_{\gamma_{p,q}\leq \gamma}) &= \E_{\xi}\left[\frac{1}{m} \sup_{f\in \calN^{d,H}_{\gamma_{p,q}\leq \gamma}} \left\lvert \sum_{i=1}^m 
\xi_i f(x_i) \right\rvert \right]\\ \notag
&= \E_{\xi}\left[\frac{1}{m} 
\sup_{f\in\calN^{d,H}}  \frac{\gamma}{\gamma_{p,q}(f)}
\left\lvert \sum_{i=1}^m \xi_i f(x_i) \right\rvert \right]\\ \notag
&= \E_{\xi}\left[\frac{1}{m}  
\sup_{g\in \calN^{d-1,H,H}}  \sup_{w} 
\frac{\gamma}{\gamma_{p,q}(g)\norm{w}_p} \left\lvert \sum_{i=1}^m \xi_i 
w^\top [g(x_i)]_+ \right\rvert \right]\\ \notag
&=\E_{\xi }\left[\frac{1}{m}  
\sup_{g\in \calN^{d-1,H,H}}   \frac{\gamma}{\gamma_{p,q}(g)} 
\norm{\sum_{i=1}^m \xi_i [g(x_i)]_+}_{p^*}\right]\\ \notag
&=\E_{\xi }\left[\frac{1}{m}  
\sup_{h\in \calN^{d-2,H,H}}  \frac{\gamma}{\gamma_{p,q}(h)}\sup_{W}\frac{1}{\norm{W}_{p,q}}
\norm{\sum_{i=1}^m \xi_i [W[h(x_i)]_+]_+}_{p^*}\right]\\ \label{eq:lemlayer}
&= H^{[\frac{1}{p^*} - \frac{1}{q}]_+}\E_{\xi}
\left[\frac{1}{m}  \sup_{h\in \calN^{d-2,H,H}} \frac{\gamma}{\gamma_{p,q}(h)} \sup_{w} 
\frac{1}{\norm{w}_{p}} \left\lvert \sum_{i=1}^m \xi_i [w^\top [h(x_i)]_+]_+
\right\rvert \right]\\\notag
&= H^{[\frac{1}{p^*} - \frac{1}{q}]_+}\E_{\xi }
\left[ \frac{1}{m} \sup_{g\in \calN^{d-1,H}_{\gamma_{p,q}\leq \gamma}} \left\lvert  \sum_{i=1}^m \xi_i [g(x_i)]_+\right\rvert  \right]\\ \label{eq:contraction}
&\leq 2 H^{[\frac{1}{p^*} - \frac{1}{q}]_+}\E_{\xi}
\left[ \frac{1}{m} \sup_{g\in \calN^{d-1,H}_{\gamma_{p,q}\leq \gamma}} \left\lvert  \sum_{i=1}^m \xi_i g(x_i)\right\rvert  \right]\\ \notag
&=2 H^{[\frac{1}{p^*} - \frac{1}{q}]_+} \calR_m(\calN^{d-1,H}_{\gamma_{p,q}\leq \gamma}) \\ \notag
\end{align}
where the equality~\eqref{eq:lemlayer} is obtained by lemma~\ref{lem:l-norm} and inequality~\eqref{eq:contraction} is by Contraction Lemma.
This will give us the bound on Rademacher complexity of $\calN^{d,H}_{\gamma_{p,q}\leq \gamma}$ based on the Rademacher complexity of  $\calN^{d-1,H}_{\gamma_{p,q}\leq \gamma}$. Applying the same argument on  all layers and using lemma~\ref{lem:layer1} to bound the  complexity of the first layer completes the proof.
\end{proof}

\subsection{Proof of Lemma~\ref{lem:l-norm}}
\label{sec:proof-lem-l-norm}
\begin{proof}
It is immediate that the right hand side of the equality in the
 statement is always less than or equal to the left hand side because
 given any vector $w$ in the right hand side, by setting each row of
 matrix $W$ in the left hand side we get the equality. Therefore, it is
 enough to prove that the left hand side is less than or equal to the
 right hand side. For the convenience of notations, let $g(w) \defeq |\sum_{i=1}^m \xi_i 
w^\top [f(x_i)]_+|$. Define $\widetilde{w}$ to be:
$$
\widetilde{w} \defeq \arg\max_{w} \frac{g(w)}{\norm{w}_{p}}
$$
If $q\leq p^*$, then the right hand side of equality in the lemma statement will reduce to $g(\widetilde{w})/\norm{\widetilde{w}}_p$ and therefore we need to show that for any matrix $V$, 
$$
\frac{g(\widetilde{w})}{\norm{\widetilde{w}}_p} \geq 
\frac{\norm{g(V)}_{p^*}}{\norm{V}_{p,q}}.
$$
Since $q\leq p^*$, we have $\norm{V}_{p,{p^*}} \leq 
 \norm{V}_{p,q}$ and hence it is enough to prove the
  following inequality:
$$
\frac{g(\widetilde{w})}{\norm{\widetilde{w}}_p} \geq 
\frac{\norm{g(V)}_{p^*}}{\norm{V}_{p,{p^*}}}.
$$
On the other hand, if $q>{p^*}$, then we need to prove the following 
inequality holds:
$$
H^{\frac{1}{{p^*}} - \frac{1}{q}}\frac{g(\widetilde{w})}
{\norm{\widetilde{w}}_p} \geq \frac{\norm{g(V)}_{p^*}}{\norm{V}_{p,q}}
$$
Since $q>{p^*}$, we have that $\norm{V}_{p,{p^*}} \leq  
H^{\frac{1}{{p^*}} - \frac{1}{q}}\norm{V}_{p,q}$. Therefore,
 it is again enough to show that:
$$
\frac{g(\widetilde{w})}{\norm{\widetilde{w}}_p} \geq 
\frac{\norm{g(V)}_{p^*}}{\norm{V}_{p,{p^*}}}.
$$
We can rewrite the above inequality in the following form:
$$
\sum_{i=1}^H \left(\frac{g(\widetilde{w})\norm{V_i}_p}
{\norm{\widetilde{w}}_p}\right)^{p^*} \geq \sum_{i=1}^H g(V_i)^{p^*}
$$
By the definition of $\widetilde{w}$, we know that the
 above inequality holds for each term in the sum and 
 hence the inequality is true.
\end{proof}

\subsection{Theorem \ref{thm:antisym}}\label{sec:antisym}

The proof is similar to the proof of theorem \ref{thm:l-norm} but here bounding $\mu_{1,\infty}$ by $\mu$ means the $\ell_1$ norm of input weights to each neuron is bounded by $\mu$. We use a different version of Contraction Lemma in the proof that is without the absolute value:
\begin{lemma}(Contraction Lemma (without the absolute value))
Let function $\phi:\R\rightarrow \R$ be Lipschitz with constant $\calL_\phi$. Then for any class $\calF$ of functions mapping from $\calX$ to $\R$ and any set $S=\{x_1,\dots,x_m\}$:
$$
\E_{\xi \in \{\pm 1\}^m}\left[\frac{1}{m}
\sup_{f\in \calF}\sum_{i=1}^m \xi_i \phi( f(x_i) )\right] \leq \calL_\phi \E_{\xi \in \{\pm 1\}^m}\left[\frac{1}{m}
\sup_{f\in \calF} \sum_{i=1}^m \xi_i f(x_i) )  \right]
$$
\end{lemma}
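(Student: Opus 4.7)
My plan is the standard ``peeling'' argument: I will iteratively replace $\phi(f(x_j))$ by $\calL_\phi f(x_j)$, one sample index $j$ at a time, conditioning on all of the other Rademacher variables. Because this version of the lemma has no absolute values around the suprema, the replacement at each step will be clean and the final multiplicative constant will be $\calL_\phi$ rather than the $2\calL_\phi$ that appears in the symmetric, absolute-value variant.

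The core step to establish is the following. Fix any index $j$ and any functional $A : \calF \to \R$ that is independent of $\xi_j$ (it may depend on the other $\xi_i$). I claim that
$$\E_{\xi_j} \sup_{f \in \calF} \bigl[A(f) + \xi_j \phi(f(x_j))\bigr] \;\leq\; \E_{\xi_j} \sup_{f \in \calF} \bigl[A(f) + \xi_j \calL_\phi f(x_j)\bigr].$$
To prove this, I would expand the inner expectation as a two-hypothesis supremum: since $\xi_j$ is uniform on $\{\pm 1\}$,
$$\E_{\xi_j} \sup_f \bigl[A(f) + \xi_j \phi(f(x_j))\bigr] = \tfrac{1}{2} \sup_{f_1, f_2 \in \calF} \Bigl[A(f_1) + A(f_2) + \phi(f_1(x_j)) - \phi(f_2(x_j))\Bigr].$$
The term $A(f_1) + A(f_2)$ is symmetric in $(f_1, f_2)$, so by relabeling whenever $f_1(x_j) < f_2(x_j)$ I may assume without loss of generality that $f_1(x_j) \geq f_2(x_j)$. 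The Lipschitz property then gives the one-sided bound $\phi(f_1(x_j)) - \phi(f_2(x_j)) \leq \calL_\phi\bigl(f_1(x_j) - f_2(x_j)\bigr)$, and rerunning the $\xi_j$-expectation identity in reverse produces the claimed inequality.

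Given this one-step replacement, I iterate. Starting from the full expression with every $\phi$ still present, I condition on $\xi_2, \ldots, \xi_m$ and apply the step at $j = 1$ with $A(f) = \sum_{i \geq 2} \xi_i \phi(f(x_i))$, converting $\phi(f(x_1))$ into $\calL_\phi f(x_1)$. I then set $A(f) = \xi_1 \calL_\phi f(x_1) + \sum_{i \geq 3} \xi_i \phi(f(x_i))$, which is independent of $\xi_2$, and repeat at $j = 2$, continuing through $j = m$. After all $m$ passes every $\phi$ has been eliminated, the scalar $\calL_\phi$ factors out of the supremum, and dividing by $m$ gives the stated bound. The one place that requires care, and what I expect to be the main conceptual obstacle, is the swap step: without absolute values we only have the one-sided Lipschitz inequality $\phi(a) - \phi(b) \leq \calL_\phi(a-b)$ for $a \geq b$, so the symmetry of $A(f_1) + A(f_2)$ under exchange of $f_1, f_2$ is essential for the relabeling to be free and for the factor to stay at $\calL_\phi$ rather than doubling. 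Note that, in contrast to the absolute-value variant, the assumption $\phi(0) = 0$ is not needed anywhere in this argument.
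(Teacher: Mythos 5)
Your proof is correct, and it is the standard Ledoux--Talagrand ``peeling'' argument for the one-sided contraction inequality. Note that the paper itself offers no proof of this lemma: it is stated as a known result and used as a black box in the proof of Theorem \ref{thm:antisym}, so there is no in-paper argument to compare against. Your key step --- rewriting $\E_{\xi_j}\sup_f[A(f)+\xi_j\phi(f(x_j))]$ as $\tfrac12\sup_{f_1,f_2}[A(f_1)+A(f_2)+\phi(f_1(x_j))-\phi(f_2(x_j))]$, exploiting the symmetry of $A(f_1)+A(f_2)$ to reduce to the case $f_1(x_j)\geq f_2(x_j)$, and then applying the one-sided Lipschitz bound --- is exactly the right mechanism, and you correctly identify both why the factor stays at $\calL_\phi$ (no absolute values around the supremum) and why $\phi(0)=0$ is not needed here (a constant shift of $\phi$ contributes $c\sum_i\xi_i$, which vanishes in expectation). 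The only cosmetic gap is that the suprema need not be attained, so the relabeling step should formally be run on $\epsilon$-approximate maximizers and $\epsilon$ sent to zero; this is routine and does not affect the argument.
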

\begin{reptheorem}{thm:antisym}
For any anti-symmetric 1-Lipschitz function $\sigma$ and any set $S=\{x_1,\dots,x_m\}\subseteq\R^D$:
$$
\calR_m(\calN_{\mu_{1,\infty}\leq \mu}^{d,\sigma}) \leq 
\sqrt{\frac{2\mu^{2d} \log(2D) \sup \norm{x_i}_{\infty}^2}{m}}
$$
\end{reptheorem}
\begin{proof}
Assuming $\xi$ is uniform over $\{\pm 1\}^m$, we have:
\begin{align}\notag
\calR_m(\calN^{d,H}_{\mu_{1,\infty}\leq \mu}) &= \E_{\xi}\left[\frac{1}{m} \sup_{f\in \calN^{d,H}_{\mu_{1,\infty}\leq \mu}} \left\lvert \sum_{i=1}^m 
\xi_i f(x_i) \right\rvert \right]\\ \notag
&= \E_{\xi}\left[\frac{1}{m} \sup_{f\in \calN^{d,H}_{\mu_{1,\infty}\leq \mu}} \sum_{i=1}^m 
\xi_i f(x_i) \right]\\ \notag
&= \E_{\xi}\left[\frac{1}{m}  
\sup_{g\in \calN^{d-1,H,H}_{\mu_{1,\infty}\leq \mu}}  \sup_{\norm{w}_1\leq \mu} 
w^\top\sum_{i=1}^m \xi_i 
\sigma(g(x_i))\right]\\ \notag
&=\E_{\xi }\left[\frac{1}{m}  
\sup_{g\in \calN^{d-1,H,H}_{\mu_{1,\infty}\leq \mu}} 
\norm{\sum_{i=1}^m \xi_i \sigma(g(x_i))}_{\infty}\right]\\ \label{eq:anti-sym}
&= \E_{\xi }
\left[ \frac{1}{m} \sup_{g\in \calN^{d-1,H}_{\mu_{1,\infty}\leq \mu}} \left\lvert  \sum_{i=1}^m \xi_i \sigma(g(x_i))\right\rvert  \right]\\\notag
&= \E_{\xi }
\left[ \frac{1}{m} \sup_{g\in \calN^{d-1,H}_{\mu_{1,\infty}\leq \mu}} \sum_{i=1}^m \xi_i \sigma(g(x_i)) \right]\\ \label{eq:anti-contraction}
&\leq \E_{\xi }
\left[ \frac{1}{m} \sup_{g\in \calN^{d-1,H}_{\mu_{1,\infty}\leq \mu}} \sum_{i=1}^m \xi_i g(x_i) \right]\\\notag
&=\E_{\xi }
\left[ \frac{1}{m} \sup_{g\in \calN^{d-1,H}_{\mu_{1,\infty}\leq \mu}} \left\lvert \sum_{i=1}^m \xi_i g(x_i) \right\rvert  \right]\\\notag
&=\calR_m(\calN^{d-1,H}_{\mu_{1,\infty}\leq \mu}) \\ \notag
\end{align}
where the equality \eqref{eq:anti-sym} is by anti-symmetric property of $\sigma$ and inequality~\eqref{eq:anti-contraction} is by the version of Contraction Lemma without the absolute value. This will give us the bound on Rademacher complexity of $\calN^{d,H}_{\mu_{1,\infty}\leq \mu}$ based on the Rademacher complexity of  $\calN^{d-1,H}_{\mu_{1,\infty}\leq \mu}$. Applying the same argument on  all layers and using lemma~\ref{lem:layer1} to bound the  complexity of the first layer completes the proof.
\end{proof}

\section{Proof that $\gamma^d_{p,q}(f)$ is a semi-norm in $\calN^d$}
\label{sec:proof-cvx}
We repeat the statement here for convenience.
\newtheorem*{thm:cvx}{Theorem \ref{thm:cvx}}
\begin{thm:cvx}
  For any $d,p,q \geq 1$ such that $\frac{1}{q}\leq
  \frac{1}{d-1}\big(1-\frac{1}{p}\big)$, $\gamma^d_{p,q}(f)$ is a semi-norm
  in $\calN^d$.
\end{thm:cvx}
\begin{proof}
The proof consists of three parts. First we show
 that the level set $\calN^d_{\gamma_{p,q}^d\leq
 \gamma}=\{f\in \calN^d: \gamma_{p,q}^d(f)\leq \gamma\}$ is a convex set
 if the condition on $d,p,q$ is satisfied. Next, we
 establish the non-negative homogeneity of $\gamma_{p,q}^d(f)$. Finally,
 we show that if a function $\alpha:\calN^d\rightarrow\R$  is non-negative
 homogeneous and every sublevel set $\{f\in \calN^d : \alpha(f)\leq
 \gamma\}$ is convex, then $\alpha$ satisfies the triangular inequality.

\paragraph{Convexity of the level sets}
First we show that for any two functions $f_1,f_2\in
 \calN^d_{\gamma_{p,q}\leq  \gamma}$ and  $0\leq \alpha \leq 1$, the
 function $g=\alpha f_1 + (1-\alpha)f_2$ is in the hypothesis class
 $\calN^{d}_{\gamma_{p,q}\leq \gamma}$. We prove this by constructing
 weights $W$ that realizes $g$. Let $U$ and $V$ be the weights of two
 neural networks such that $\gamma_{p,q}(U) = \gamma_{p,q}^{d}(f_1)\leq
 \gamma$ and $\gamma_{p,q}(V)=\gamma_{p,q}^d(f_2) \leq \gamma$. 
 For every layer $i=1,\ldots,d$ let
  \begin{equation*}
    \label{eq:W1}
    \tilde{U}_i = \sqrt[d]{\gamma_{p,q}(U)}U_i/{\norm{U_i}_{p,q}},
  \quad\quad \tilde{V}_i = \sqrt[d]{\gamma_{p,q}(V)}V_i/{\norm{V_i}_{p,q}}.
  \end{equation*}
  and set $W_1 = \begin{bmatrix} \tilde{U}_1 \\
    \tilde{V}_1 \end{bmatrix}$ for the first layer, $W_i =
  \begin{bmatrix} \tilde{U}_i & 0 \\ 0 & \tilde{V}_i\end{bmatrix}$ for
  the intermediate layers and $W_d = 
\begin{bmatrix}
\alpha\tilde{U}_d &  (1-\alpha)\tilde{V}_d
\end{bmatrix}$ for the output layer.

Then for the defined $W$, we have $f_W=\alpha f_1 + (1-\alpha)f_2$ for
rectified linear and any other non-negative homogeneous activation
function. Moreover, for any $i<d$, the norm of each layer is
\begin{equation}\label{eq:conv-i}
\norm{W_i}_{p,q} = \left(\gamma_{p,q}(U)^{\frac{q}{d}} + \gamma_{p,q}(V)^{\frac{q}{d}}\right)^{\frac{1}{q}} \leq 2^\frac{1}{q}\gamma^{\frac{1}{d}}
\end{equation}
and in layer $d$ we have:
\begin{equation}\label{eq:conv-d}
\norm{W_d}_p = \left(\alpha^p\gamma_{p,q}(U)^{\frac{p}{d}} + (1-\alpha)^p\gamma_{p,q}(V)^{\frac{p}{d}}\right)^{\frac{1}{p}} \leq 2^{1/p-1}\gamma^{1/d}
\end{equation}
Combining inequalities \eqref{eq:conv-i} and \eqref{eq:conv-d}, we get
$
\gamma^{d}_{p,q}(f_W) \leq 2^{\frac{d-1}{q} + \frac{1}{p}} \gamma \leq \gamma,
$
where the last inequality holds because we assume that
$\frac{1}{q}\leq \frac{1}{d-1}\big(1-\frac{1}{p}\big)$. Thus for every
$\gamma\geq 0$, $\calN^d_{\gamma_{p,q}\leq \gamma}$ is a convex set.

\paragraph{Non-negative homogeneity}
  For any function $f\in\calN^d$ and any $\alpha \geq 0$, let
  $U$ be the weights realizing $f$ with
  $\gamma^d_{p,q}(f)=\gamma_{p,q}(U)$. Then $\sqrt[d]{\alpha}U$
  realizes $\alpha f$ establishing $\gamma^d_{p,q}(\alpha f) \leq
  \gamma_{p,q}(\sqrt[d]{\alpha}U) = \alpha \gamma_{p,q}(U)=\alpha
  \gamma^d_{p,q}(U)=\alpha \gamma_{p,q}^d(f)$. This establishes the
 non-negative homogeneity of $\gamma_{p,q}^d$.
\paragraph{Convex sublevel sets and homogeneity imply triangular inequality}
Let $\alpha(f)$ be non-negative homogeneous and assume that every
sublevel set $\{f\in\calN^d:\alpha(f)\leq \gamma\}$ is convex. Then
for  $f_1,f_2\in\calN^d$, defining $\gamma_1\defeq\alpha(f_1)$,
 $\gamma_2\defeq \alpha(f_2)$, $\tilde{f}_1\defeq(\gamma_1+\gamma_2)f_1/\gamma_1$, and
 $\tilde{f}_2\defeq(\gamma_1+\gamma_2)f_2/\gamma_2$, we have
\begin{align*}
\alpha(f_1+f_2)=\alpha\left(\frac{\gamma_1}{\gamma_1+\gamma_2}\tilde{f}_1+\frac{\gamma_2}{\gamma_1+\gamma_2}\tilde{f}_2\right)\leq
 \gamma_1+\gamma_2=\alpha(f_1)+\alpha(f_2).
\end{align*}
Here the inequality is due to the convexity of the
level set  and the fact that
$\alpha(\tilde{f}_1)=\alpha(\tilde{f}_2)=\gamma_1+\gamma_2$, because of
the homogeneity. Therefore $\alpha$ satisfies the triangular inequality
and thus it is a seminorm.

\end{proof}


\section{Path Regularization}\label{sec:path-appendix}
\subsection{Theorem \ref{thm:path-layer}}
\label{sec:path-layer}
\begin{lemma}\label{lem:unit-norm}
For any function $f\in \calN^{d,H}_{\gamma_{p,\infty} \leq \gamma}$ there is a layered network with weights $w$ such that $\gamma_{p,\infty}(w) = \gamma^{d,H}_{p,\infty}(f)$ and for any internal unit $v$, $\sum_{(u\rightarrow v)\in E} |w(u\rightarrow v)|^p = 1$.
\end{lemma}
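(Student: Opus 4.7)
The plan is to start with weights $w$ realizing $f$ for which $\gamma_{p,\infty}(w) = \gamma^{d,H}_{p,\infty}(f)$ (the infimum is attained by a standard compactness/continuity argument, and if it is not I would instead apply the construction to any $\epsilon$-optimal $w$ and take a limit). I would then produce the desired network by a single global rescaling that uses non-negative homogeneity of the RELU: for any choice of positive scalars $s_v$ indexed by the hidden units, setting $s_u := 1$ for input nodes and for $\vout$, the substitution $w(u\to v) \mapsto s_u\, w(u\to v)/s_v$ leaves the function computed by the network unchanged, because each hidden unit's incoming weights are scaled by $1/s_v$ and its outgoing weights are compensated by $s_v$.

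Next, I would pick the $s_v$'s in a bottom-up recursion so that the new incoming $\ell_p$ norm at each internal unit is exactly $1$. Specifically, for hidden units $v$ in the first hidden layer, define $s_v^p := \sum_{u\to v}|w(u\to v)|^p$, and for $v$ in a deeper layer $k$, recursively define $s_v^p := \sum_{u\to v} s_u^p\, |w(u\to v)|^p$ using the already-defined $s_u$'s in layer $k-1$. A direct substitution then gives $\sum_{u\to v}|s_u w(u\to v)/s_v|^p = 1$ at every internal $v$, which is the per-unit normalization demanded by the lemma. Units with $s_v = 0$ are dead and can be excised from the graph without changing $f$, so I can assume every $s_v$ is positive.

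It remains to check that the new weights $w^{\mathrm{new}}$ still achieve $\gamma_{p,\infty}(w^{\mathrm{new}}) = \gamma^{d,H}_{p,\infty}(f)$. After rescaling, every row of $W_k^{\mathrm{new}}$ for $k = 1,\dots,d-1$ has $\ell_p$ norm $1$, so $\|W_k^{\mathrm{new}}\|_{p,\infty} = 1$ and the product $\gamma_{p,\infty}(w^{\mathrm{new}}) = \prod_k \|W_k^{\mathrm{new}}\|_{p,\infty}$ collapses to $\|W_d^{\mathrm{new}}\|_{p,\infty}$, where $W_d^{\mathrm{new}}$ is the output layer. The key algebraic step, to be proved by induction on the layer index $k$, is the telescoping inequality $s_v^p \leq \prod_{k'=1}^{k}\|W_{k'}\|_{p,\infty}^p$ for every hidden unit $v$ in layer $k$: the base case is immediate from $\sum_{u\to v}|w(u\to v)|^p \leq \|W_1\|_{p,\infty}^p$, and the inductive step pulls the hypothesis out of the defining sum and applies the same per-row bound on layer $k$. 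Applying this one more step at $\vout$ gives $\|W_d^{\mathrm{new}}\|_{p,\infty}^p = \sum_{u\to \vout} s_u^p |w(u\to \vout)|^p \leq \prod_{k=1}^{d}\|W_k\|_{p,\infty}^p = \gamma_{p,\infty}(w)^p$, hence $\gamma_{p,\infty}(w^{\mathrm{new}}) \leq \gamma_{p,\infty}(w) = \gamma^{d,H}_{p,\infty}(f)$; the reverse inequality is automatic since $w^{\mathrm{new}}$ realizes $f$.

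The main obstacle I expect is the bookkeeping around the order in which the rescalings are applied: the $s_v$'s at layer $k$ depend on those below, so the transformation must be defined globally in one shot rather than layer-by-layer with intermediate re-evaluations, and one must verify that the new layer norms can be read off cleanly from the recursion. A secondary nuisance is the dead-unit case $s_v = 0$, which I handle up front by pruning so that every remaining hidden unit has strictly positive incoming $\ell_p$ norm; otherwise the telescoping induction and the collapse of the product go through without further issues.
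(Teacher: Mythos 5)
Your proof is correct, and it takes a genuinely different (and cleaner) route than the paper's. The paper proves the lemma by an iterative equalization algorithm: within a layer it repeatedly scales down the units whose incoming $\ell_p$ norm attains the current maximum, compensates by scaling their outgoing edges, propagates the resulting ratios $r(v)$ layer by layer toward the output, argues the overall product $\gamma_{p,\infty}$ never increases, and terminates after finitely many steps per layer before a final rescaling. You instead define the cumulative per-unit scale factors $s_v$ in closed form by the bottom-up recursion $s_v^p=\sum_{u\to v}s_u^p\abs{w(u\to v)}^p$ and perform a single global substitution $w(u\to v)\mapsto s_u w(u\to v)/s_v$, which by non-negative homogeneity leaves $f$ unchanged, makes every internal unit's incoming $\ell_p$ norm exactly $1$ by construction, and reduces the bookkeeping to the telescoping bound $s_v^p\leq\prod_{k'\leq k}\norm{W_{k'}}_{p,\infty}^p$, from which $\norm{W_d^{\mathrm{new}}}_p\leq\gamma_{p,\infty}(w)$ and hence equality with $\gamma^{d,H}_{p,\infty}(f)$ follow (the reverse inequality being automatic). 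This buys you a one-shot construction with no termination or propagation-order argument, whereas the paper's scheme needs the somewhat delicate claims that each equalization step does not increase $\gamma_{p,\infty}$ and that the process stabilizes. Two minor points, both at the same level of rigor as the paper: the attainment of the infimum defining $\gamma^{d,H}_{p,\infty}(f)$ is assumed in both arguments (your $\epsilon$-optimal fallback is fine), and for dead units, rather than pruning (which leaves zero rows in the fully connected layered graph), you can simply assign such a unit an arbitrary unit-norm incoming vector and zero outgoing weights, which keeps every layer norm equal to $1$ and matches the lemma's statement literally.
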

\begin{proof}
Let $w$ be the weights of a network such that $\gamma_{p,\infty}(w) = \gamma^{d,H}_{p,\infty}(f)$. We now construct a network with weights $\widetilde{w}$ such that $\gamma_{p,\infty}(w) = \gamma^{d,H}_{p,\infty}(f)$ and for any internal unit $v$, $\sum_{(u\rightarrow v)\in E} |\widetilde{w}(u\rightarrow v)|^p = 1$. We do this by an incremental algorithm. Let $w_0=w$. At each step $i$, we do the following. 

Consider the first layer, Set $V_k$ to be the set of neurons in the layer $k$.
Let $x$ be the maximum of $\ell_p$ norms of input weights to each neuron in set $V_1$ and let $U_x\subseteq V_1$ be the set of neurons whose $\ell_p$ norms of their input weight is exactly $x$. Now let $y$ be the maximum of $\ell_p$ norms of input weights to each neuron in the set $V_1\setminus U_x$ and let $U_y$ be the set of the neurons such that the $\ell_p$ norms of their input weights is exactly $y$. Clearly $y<x$. We now scale down the input weights of neurons in set $U_x$ by $y/x$ and scale up all the outgoing edges of vertices in $U_x$ by $x/y$ ($y$ cannot be zero for internal neurons based on the definition). It is straightforward that the new network realizes the same function and the $\ell_{p,\infty}$ norm of the first layer has changed by a factor $y/x$. Now for every neuron $v\in V_2$, let $r(v)$ be the $\ell_{p}$ norm of the new incoming weights divided by $\ell_p$ norm of the original incoming weights. We know that $r(v)\leq x/y$. We again scaly down the input weights of every$v\in V_2$ by $1/r(v)$ and scale up all the outgoing edges of $v$ by $r(v)$. Continuing this operation to on each layer, each time we propagate the ratio to the next layer while the network always realizes the same function and for each layer $k$, we know that for every $v\in V_k$, $r(v)\leq x/y$. After this operation, in the network, the $\ell_{p,\infty}$ norm of the first layer is scaled down by $y/x$ while the $\ell_{p,\infty}$ norm of the last layer is scaled up by at most $x/y$ and the $\ell_{p,\infty}$ norm of the rest of the layers has remained the same. Therefore, if $w_i$ is the new weight setting, we have $\gamma_{p,\infty}(w_i) \leq \gamma_{p,\infty}(w_{i-1})$.

After continuing the above step at most $|V_1|-1$ times, the $\ell_p$ norm of input weights is the same for all neurons in $V_1$. We can then run the same algorithm on other layers and at the end we have a network with weight setting $\widetilde{w}$ such that the for each $k<d$, $\ell_p$ norm of input weight to each of the neurons in layer $k$ is equal to each other and $\gamma_{p,\infty}(\widetilde{w})\leq \gamma_{p,\infty}(w)$. This is in fact an equality because weight setting $w'$ realizes function $f$ and we know that $\gamma_{p,\infty}(w) = \gamma^{d,H}_{p,\infty}(f)$. A simple scaling of weights in layers gives completes the proof.
\end{proof}

\begin{reptheorem}{thm:path-layer}
For $p\geq 1$, any $d$ and (finite or infinite) $H$, for any $f\in\calN^{d,H}$:  $\pathr_p^{d,H}(f) = \gamma^{d,H}_{p,\infty}$.
\end{reptheorem}
\begin{proof}
By the Lemma \ref{lem:unit-norm}, there is a layered network with weights $\widetilde{w}$ such that $\gamma_{p,\infty}(\widetilde{w}) = \gamma^{d,H}_{p,\infty}(f)$ and for any internal unit $v$, $\sum_{(u\rightarrow v)\in E} |\widetilde{w}(u\rightarrow v)|^p = 1$. Let $W$ be the weights of the layered network that corresponds to the function $\widetilde{w}$. Then we have:
\begin{align}
v_p(\tilde{w}) &= \left(
    \sum_{\vin[i]\overset{e_1}{\rightarrow}v_1\overset{e_2}{\rightarrow}v_2\cdots\overset{e_k}{\rightarrow}\vout} \prod_{i=1}^k \abs{\widetilde{w}(e_i)}^p \right)^\frac{1}{p}\\ \label{eq:uniteq1}
    &= \left(\sum_{i_{d-1}=1}^H \dots \sum_{i_1=1}^H \sum_{i_0=1}^D  \lvert W_d[i_{d-1}]\rvert^p \prod_{k=1}^{d-1} \lvert W_k[i_{k},i_{k-1}]\rvert^p\right)^{\frac{1}{p}}\\
&= \left(\sum_{i_{d-1}=1}^H \lvert W_d[i_{d-1}]\rvert^p \dots \sum_{i_1=1}^H  \lvert W_k[i_2,i_{1}]\rvert^p \sum_{i_0=1}^D \lvert W_k[i_1,i_{0}]\rvert^p\right)^{\frac{1}{p}}\\
&= \left(\sum_{i_{d-1}=1}^H \lvert W_d[i_{d-1}]\rvert^p \dots \sum_{i_1=1}^H  \lvert W_k[i_2,i_{1}]\rvert^p \right)^{\frac{1}{p}}\\
&= \left(\sum_{i_{d-1}=1}^H \lvert W_d[i_{d-1}]\rvert^p \dots \sum_{i_2=1}^H  \lvert W_k[i_3,i_{2}]\rvert^p \right)^{\frac{1}{p}}\\ \label{eq:uniteq2}
&= \left(\sum_{i_{d-1}=1}^H \lvert W_d[i_{d-1}]\rvert^p \right)^{\frac{1}{p}} = \ell_p(W_d) =\gamma_{p,\infty}(W)\\
\end{align}
where inequalities~\ref{eq:uniteq1} to \ref{eq:uniteq2} are due to the fact that the $\ell_p$ norm of input weights to each internal neuron is exactly 1 and the last equality is again because $\ell_{p,\infty}$ of all layers is exactly 1 except the layer $d$.
\end{proof}
\subsection{Proof of Theorem \ref{thm:path-dag}}
\label{sec:proof-path-dag}
In this section, without loss of generality, we assume that all the internal nodes in a DAG have incoming edges and outgoing edges because otherwise we can just discard them. Let $\dout(v)$ be the longest directed path from vertex $v$ to $\vout$ and $\din(v)$ be the longest directed path from any input vertex $\vin[i]$ to $v$. We say graph $G$ is a sublayered graph if $G$ is a subgraph of a layered graph.

We first show the necessary and sufficient conditions under which a DAG is a sublayered graph.

\begin{lemma}\label{lem:layer-cond}
The graph $G(E,V)$ is a sublayered graph if and only if any path from input nodes to the output nodes has length $d$ where $d$ is the length of the longest path in $G$
\end{lemma}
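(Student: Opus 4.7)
The plan is to prove the two directions of the equivalence separately, with the backward direction being the substantive one. For the forward (``only if'') direction, assuming $G$ is a subgraph of a layered graph $G'$ with layers $V_1,\ldots,V_d$ (so $\vin[i]\in V_1$, $\vout\in V_d$, and every edge of $G'$ goes from some $V_k$ to $V_{k+1}$), I would observe that any directed path in $G$ from an input node to $\vout$ is also a path in $G'$, and since each edge advances the layer index by exactly one, the path must visit every layer $V_1,\ldots,V_d$ in turn and hence has length $d$.

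For the backward (``if'') direction, assuming every input-to-output path in $G$ has length $d$, the central step is to establish the identity
\[
\din(v) + \dout(v) = d \quad \text{for every } v \in V.
\]
The argument is that, since every internal node has both incoming and outgoing edges by the standing assumption at the start of this subsection, each $v$ lies on at least one input-to-output path; concatenating the longest input-to-$v$ path with the longest $v$-to-output path yields such a path of length $\din(v)+\dout(v)$, which must equal $d$ by hypothesis. (The input and output vertices satisfy the identity trivially, using that the longest path in $G$ is itself an input-to-output path of length $d$.)

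With this identity in hand, for every edge $(u\to v)\in E$ the input-to-$u$ path of length $\din(u)$ followed by the edge $(u\to v)$ and the longest $v$-to-output path is itself an input-to-output path, hence has length $\din(u)+1+\dout(v)=d$; combined with $\din(u)+\dout(u)=d$ this forces $\dout(v)=\dout(u)-1$ and therefore $\din(v)=\din(u)+1$. Defining $\layer(v)\defeq \din(v)+1$ then produces a consistent partition of $V$ into layers: input nodes lie in layer $1$, $\vout$ in the top layer, and every edge crosses exactly one layer boundary, so $G$ embeds as a subgraph of the fully-connected layered graph supported on this partition. The main obstacle is pinning down the additive identity $\din(v)+\dout(v)=d$; once that is in place the layer assignment via $\din$ is essentially forced, and the only minor subtlety is bookkeeping the path-length convention consistently, which does not affect the shape of the argument.
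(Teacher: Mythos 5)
Your proof is correct and follows essentially the same route as the paper's: both directions hinge on the layer assignment $v \mapsto \din(v)$, justified by concatenating a longest input-to-$v$ path with a continuation to the output and invoking the hypothesis that all input-to-output paths have the maximal length $d$. Your intermediate identity $\din(v)+\dout(v)=d$ is just a mild repackaging of the paper's observation that every path through $u$ places $u$ at distance exactly $\din(u)$ from the input, so the two arguments are equivalent in substance (and yours is, if anything, stated more carefully).
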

\begin{proof}
Since the internal nodes have incoming edges and outgoing edges; hence if $G$ is a sublayered graph it is straightforward by induction on the layers that for every vertex $v$ in layer $i$, there is a vertex $u$ in layer $i+1$ such that $(v\rightarrow u)\in E$ and this proves the necessary condition for being sublayered graph.

To show the sufficient condition, for any internal node $u$, $u$ has $\din(v)$ distance from the input node in every path that includes $u$ (otherwise we can build a path that is longer than $d$). Therefore, for each vertex $v\in V$, we can place vertex $v$ in layer $\din(v)$ and all the outgoing edges from $v$ will be to layer $\din(v)+1$.
\end{proof}

\begin{lemma}\label{lem:path-edge}
If the graph $G(E,V)$ is not a sublayered graph then there exists a directed edge $(u\rightarrow v)$ such that  
$\din(u)+\dout(v)<d-1$ where $d$ the length of the longest path in $G$.
\end{lemma}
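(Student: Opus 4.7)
The plan is to argue by contradiction, exploiting the short input-to-output path that exists by Lemma~\ref{lem:layer-cond}. Since $G$ is not sublayered, there is some directed path $v_0 \to v_1 \to \cdots \to v_k$ from an input $v_0$ to the output $v_k = \vout$ whose length $k$ is strictly less than $d$. I will show that at least one of the edges $(v_i \to v_{i+1})$ along this path must satisfy the required strict inequality.

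First I would record the universal upper bound: for any edge $(u\to v)\in E$, concatenating a longest path from an input to $u$, the edge itself, and a longest path from $v$ to $\vout$ gives an input-to-output path of length $\din(u) + 1 + \dout(v) \leq d$, so $\din(u)+\dout(v)\leq d-1$ holds at every edge. It therefore suffices to rule out equality along every edge of the short path $P$. I would also record the analogous bound at a vertex: concatenating a longest input-to-$v$ path with a longest $v$-to-output path produces an input-to-output path of length $\din(v)+\dout(v)\leq d$.

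Now suppose for contradiction that $\din(v_i)+\dout(v_{i+1}) = d-1$ for every $i = 0,\dots,k-1$. I would prove by induction on $i$ that $\din(v_i) = i$ and $\dout(v_i) = d-i$. For the base case, $v_0$ is an input node with no incoming edges, so $\din(v_0)=0$ and the equality at edge $(v_0\to v_1)$ forces $\dout(v_1)=d-1$; the vertex bound then gives $\din(v_1)\leq 1$, while $P$ itself exhibits an input-to-$v_1$ path of length $1$, so $\din(v_1)=1$. For the inductive step, $\din(v_i)+\dout(v_{i+1})=d-1$ combined with the inductive value of $\din(v_i)$ yields $\dout(v_{i+1}) = d-1-i$, and then the vertex bound together with the trivial lower bound $\din(v_{i+1})\geq i+1$ coming from $P$ pins $\din(v_{i+1}) = i+1$.

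Plugging $i=k$ into this conclusion gives $\dout(v_k) = d-k$, but $v_k=\vout$ has no outgoing edges so $\dout(v_k)=0$, forcing $k=d$ and contradicting $k<d$. Hence some edge $(v_i\to v_{i+1})$ on $P$ must satisfy $\din(v_i)+\dout(v_{i+1}) < d-1$, proving the lemma. The only delicate step is the inductive argument, but it is mechanical once the two universal bounds (at edges and at vertices) are in hand, so I do not anticipate any real obstacle.
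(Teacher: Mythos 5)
Your proof is correct and follows essentially the same route as the paper's: both arguments take the short input-to-output path guaranteed by Lemma~\ref{lem:layer-cond} and run the same induction pinning $\din(v_i)=i$ and $\dout(v_i)=d-i$ along it. The only cosmetic difference is that you phrase it as a contradiction resolved at $\vout$ (where $\dout(\vout)=0$ forces $k=d$), whereas the paper argues directly that either some intermediate edge already violates equality or the final edge into $\vout$ does.
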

\begin{proof}
We prove the lemma by an inductive argument. If $G$ is not sublayered, by lemma~\ref{lem:layer-cond}, we know that there exists a path $v_0\rightarrow\dots v_i\dots \rightarrow v_{d'}$ where $v_0$ is an input node ($\din(v_0)=0$), $v_{d'}=\vout$ ($\dout(v_{d'}=0$) and $d'<d$. Now consider the vertex $v_1$. We need to have $\dout(v_1)=d-1$ otherwise if $\dout(v_1)<d-1$ we get $\din(u)+\dout(v)<d-1$ and if  $\dout(v_1)>d-1$ there will be path in $G$ that is longer than $d$. Also, since $\dout(v_1)=d-1$ and the longest path in $G$ has length $d$, we have $\din(v_1)=1$. 

By applying the same inductive argument on each vertex $v_i$ in the path we get $\din(v_i)=i$ and $\dout(v_i)=d-i$. Note that if the condition $\din(u)+\dout(v)<d-1$ is not satisfied in one of the steps of the inductive argument, the lemma is proved. Otherwise, we have $\din(v_{d'-1})=d'-1$ and $\dout(v_{d'-1})=d-d'+1$ and therefore $\din(v_{d'-1})+\dout(\vout) = d'-1<d-1$ that proves the lemma.
\end{proof}
\begin{reptheorem}{thm:path-dag}
  For any $p\geq 1$ and any $d$: $\displaystyle \gamma^d_{p,\infty}(f) =
  \min_{\textrm{$G\in \DAG(d)$}} \pathr^G_p(f)$.
\end{reptheorem}
\begin{proof}
Consider any $f_{G,w} \in \calN^{\DAG(d)}$ where the graph $G(E,V)$ is not sublayered. Let $\rho$ be the total number of paths from input nodes to the output nodes. Let $T$ be sum over paths of the length of the path. We indicate an algorithm to change $G$ into a sublayered graph $\tilde{G}$ of depth $d$ with weights $\tilde{w}$ such that $f_{G,w}=f_{\tilde{G},\tilde{w}}$ and $\pathr(w)=\pathr(\tilde{w})$. Let $G_0=G$ and $w_0=w$. 

At each step $i$, we consider the graph $G_{i-1}$. If $G_{i-1}$ is sublayered, we are done otherwise by lemma \ref{lem:path-edge}, there exists an edge $(u\rightarrow v)$ such that $\din(u)+\dout(v)<d-1$. Now we add a new vertex $\tilde{v}_i$ to graph $G_{i-1}$, remove the edge $(u\rightarrow v)$, add two edges $(u\rightarrow \tilde{v}_i)$ and $(\tilde{v}_i\rightarrow v)$ and return the graph as $G_{i}$ and since we had $\din(u)+\dout(v)<d-1$ in $G_{i-1}$, the longest path in $G_{i}$ still has length $d$. We also set $w(u\rightarrow \tilde{v}_i) = \sqrt{|w(u\rightarrow v)|}$ and $w(\tilde{v}_i\rightarrow v) = \sign(w(u\rightarrow v))\sqrt{|w(u\rightarrow v)|}$. Since we are using rectified linear units activations, for any $x>0$, we have $[x]_+=x$ and therefore:
\begin{align*}
w(\tilde{v}_i\rightarrow v)\left[w(u\rightarrow \tilde{v}_i) o(u)\right]_+ &=\sign(w(u\rightarrow v))\sqrt{|w(u\rightarrow v)|}\left[\sqrt{|w(u\rightarrow v)|} o(u)\right]_+\\
&=\sign(w(u\rightarrow v))\sqrt{|w(u\rightarrow v)|}\sqrt{|w(u\rightarrow v)|} o(u)\\
&=w(u\rightarrow v)o(u)
\end{align*}
So we conclude that $f_{G_{i},w_i}=f_{G_{i-1},w_{i-1}}$. Clearly, since we didn't change the length of any path from input vertices to the output vertex, we have $\pathr(w)=\pathr(\tilde{w})$. Let $T_i$ be sum over paths of the length of the path in $G_i$. It is clear that $T_{i-1} \leq T_{i}$ because we add a new edge into a path at each step. We also know by lemma~\ref{lem:layer-cond} that if $T_{i}=\rho d$, then $G_i$ is a sublayered graph. Therefore, after at most $\rho d - T_0$ steps, we return a sublayered graph $\tilde{G}$ and weights $\tilde{w}$ such that $f_{G,w}=f_{\tilde{G},\tilde{w}}$. We can easily turn the sublayered graph $\tilde{G}$ a layered graph by adding edges with zero weights and this together with Theorem \ref{thm:path-layer} completes the proof.
\end{proof}

\section{Hardness of Learning Neural Networks}\label{sec:hardness}

 \citet{Daniely14} show in Theorem 5.4 and in Section 7.2 that subject to the strong random CSP assumption, for
  any $k=\omega(1)$ the hypothesis class of intersection of
  homogeneous halfspaces over $\{\pm 1\}^n$ with normals in $\{\pm
  1\}$ is not efficiently PAC learnable (even
  improperly)\footnote{Their Theorem 5.4 talks about unrestricted
    halfspaces, but the construction in Section 7.2 uses only data in
    $\{ \pm 1 \}^D$ and halfspaces specified by $\langle w,x\rangle
    >0$ with $w\in\{\pm 1\}^D$}. Furthermore, for any $\epsilon>0$, \cite{klivans2006} prove this hardness result subject to intractability of $\tilde{Q}(D^{1.5})$-unique shortest
vector problem for $k=D^\epsilon$.

  If it is not possible to efficiently PAC learn intersection of
halfspaces (even improperly), we can conclude it is also not possible
to efficiently PAC learn any hypothesis class which can represent such
intersection. In Theorem~\ref{thm:hardness} we show that intersection of homogeneous half spaces can be realized with unit margin by neural networks with bound norm.
\begin{theorem}\label{thm:hardness}
For any $k>0$, the intersection of $k$ homogeneous half spaces is realizable with unit margin by $\calN^2_{\gamma_{p,q} \leq \gamma}$ where $\gamma=4D^{\frac{1}{p}}k^2$.
\end{theorem}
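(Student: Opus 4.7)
The plan is to construct an explicit two-layer ReLU network that realizes the intersection with unit margin, and then bound its group-norm $\gamma_{p,q}$. Given halfspaces $\{\langle w_i, x\rangle > 0\}_{i=1}^k$ with normals $w_i\in\{\pm1\}^D$ acting on inputs $x\in\{\pm1\}^D$, I would first observe that on the hypercube $\langle w_i,x\rangle$ is integer-valued, so an input lies in the intersection iff $\langle w_i,x\rangle\geq 1$ for every $i$ and violates it whenever some $\langle w_j,x\rangle\leq 0$. To implement the detector $[1-\langle w_i,x\rangle]_+$, which vanishes exactly when the $i$-th halfspace is satisfied, I would augment $x$ with a bias coordinate $x_0=1$, working with inputs in $\R^{D+1}$.

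The hidden layer has $k+1$ units: one constant unit $h_0$ reading only the bias coordinate (input weights $e_0$, $\ell_p$-norm $1$, output $h_0=1$), and for each $i=1,\ldots,k$ a detector $h_i(x)=[1-\langle w_i,x\rangle]_+$ with input weights $(1,-w_i)\in\R^{D+1}$ of $\ell_p$-norm $(1+D)^{1/p}$. The output is $f(x)=h_0-2\sum_{i=1}^k h_i(x)=1-2\sum_{i=1}^k[1-\langle w_i,x\rangle]_+$. Unit margin is immediate: if all halfspaces are satisfied then every $h_i$ vanishes so $f(x)=1$, while if some $\langle w_j,x\rangle\leq 0$ then $h_j\geq 1$ and the other $h_i$'s are non-negative, so $f(x)\leq -1$.

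It remains to bound $\gamma_{p,q}(W)=\|W_1\|_{p,q}\|W_2\|_p$ (the second factor is just $\|W_2\|_p$ because the output layer has a single row). Direct computation gives $\|W_1\|_{p,q}=(1+k(1+D)^{q/p})^{1/q}$ and $\|W_2\|_p=(1+k\cdot 2^p)^{1/p}$. Using elementary inequalities $1+D\leq 2D$, $1+ka\leq 2ka$ when $ka\geq 1$, together with $1/p+1/q\leq 2$ for $p,q\geq 1$, these collapse to a bound of the form $C\cdot k^{1/p+1/q}D^{1/p}\leq C k^2 D^{1/p}$, from which the stated bound $\gamma\leq 4D^{1/p}k^2$ follows after a slightly more careful accounting of constants.

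The main obstacle is the uniform-in-$(p,q)$ bookkeeping: making sure the constant in front is no worse than $4$ simultaneously over the full range $p,q\in[1,\infty]$, including the degenerate regimes $p=\infty$ or $q=\infty$. This is a routine but slightly delicate calculation; the construction itself is simple and the margin argument is essentially a one-line check.
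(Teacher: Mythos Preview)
Your approach is correct and follows the same high-level strategy as the paper---build a two-layer ReLU network that detects each halfspace on integer inputs and then aggregates---but the gadget differs. The paper uses $2k$ hidden units, two per halfspace, implementing the ramp indicator $[\langle w_i,x\rangle]_+ - [\langle w_i,x\rangle -1]_+$, which on integer arguments equals the $0/1$ indicator of $\{\langle w_i,x\rangle\geq 1\}$; the output layer simply combines these with weights of magnitude $1$. You instead use $k+1$ units: one constant bias unit and $k$ one-sided violation detectors $[1-\langle w_i,x\rangle]_+$, with output weights of magnitude $2$. Your margin argument is in fact the cleaner of the two, since your $f$ lands directly in $\{+1\}\cup(-\infty,-1]$, whereas the paper's $f$ takes values in $\{0,1,\ldots,k\}$ and tacitly needs an extra offset to meet the $f(x)g(x)\geq 1$ definition verbatim.

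Where the paper's gadget buys something is precisely the constant you flag as the obstacle. With all output weights equal to $\pm 1$ and every first-layer row of $\ell_p$ norm essentially $D^{1/p}$, one reads off $\gamma_{p,q}(W)=D^{1/p}(2k)^{1/q}(2k)^{1/p}\leq 4D^{1/p}k^2$ with no slack to manage. In your construction the factor $2$ in $W_2$ and the bias coordinate in $W_1$ make the crude estimates you sketch come out closer to $16$ than to $4$, and for very small parameters (e.g.\ $p=q=1$, $D=k=1$, or $D=k=2$) your $\gamma_{p,q}(W)$ genuinely exceeds $4D^{1/p}k^2$, so no ``more careful accounting'' will recover the constant $4$ uniformly from this particular network. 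Asymptotically in $k$ and $D$ your bound is at least as good (leading term $\sim 2k^{1/p+1/q}D^{1/p}$), so if the statement is read up to an absolute constant your proof is complete; if the exact $4$ is required for all $D,k,p,q$, switch to the two-units-per-halfspace construction and the bookkeeping becomes a one-liner.
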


\begin{proof}
The proof is by a construction that is similar to the one in \cite{livni14}.
For each hyperplane $\inner{w_i}{x}>0$, where
  $w_i\in \{\pm 1\}^D$, we include two units in the first layer:
  $g^+_{i}(x) = [\inner{w_i}{x}]_+$ and $g^-_i(x) =
  [\inner{w_i}{x}-1]_+$.  We set all incoming weights of the
  output node to be $1$.  Therefore, this network is realizing the
  following function:
$$
f(x) = \sum_{i=1}^k \left([\inner{w_i}{x}]_+ - [\inner{w_i}{x}-1]_+\right)
$$
Since all inputs and all weights are integer, the outputs of the first
layer will be integer, $\left([\inner{w_i}{x}]_+ -
  [\inner{w_i}{x}-1]_+\right)$ will be zero or one, and $f$ realizes
the intersection of the $k$ halfspaces with unit margin. Now, we just
need to make sure that $\gamma^2_{p,q}(f)$ is bounded by $\gamma=4D^{\frac{1}{p}}k^2$:
\begin{align*}
\gamma^2_{p,q}(f) &= D^{\frac{1}{p}} (2k)^{\frac{1}{q}}(2k)^{\frac{1}{p}} \\
&\leq D^{\frac{1}{p}}(2k)^{2} = \gamma.
\end{align*}
\end{proof}
\end{document}